\def\1{\mathbf{1}} 
\def\0{\mathbf{0}}
\theoremstyle{plain}
\newtheorem{thm}{Theorem}[section]
\newtheorem{theorem}[thm]{Theorem}
\newtheorem{lemma}[thm]{Lemma}
\newtheorem{proposition}[thm]{Proposition}
\newtheorem{definition}[thm]{Definition}
\newtheorem{claim}[thm]{Claim}
\newtheorem*{claim*}{Claim}
\theoremstyle{remark}
\newtheorem{remark}[thm]{Remark}
\renewcommand{\norm}[1]{\|#1\|}
\newcommand{\set}[1]{\{#1\}}
\newcommand{\Set}[1]{\left\{#1\right\}}
\def\supp{{\mathrm{supp}}}                               
\DeclareMathOperator{\Ex}{\mathbb{E}}           
\newcommand{\N}{\mathbb{N}}
\newcommand{\cI}{\mathcal I}
\newcommand{\eps}{\varepsilon}
\newcommand{\agnorm}[2][]{
	\ifthenelse{\equal{#2}{}}{
		\widetilde{\gamma}_2^{#1}
	}{
		\widetilde{\gamma}_2^{#1}(#2)
	}
}
\DeclareFontFamily{U}{mathx}{}
\DeclareFontShape{U}{mathx}{m}{n}{<-> mathx10}{}
\DeclareSymbolFont{mathx}{U}{mathx}{m}{n}
\DeclareMathAccent{\widecheck}{0}{mathx}{"71}
\newcommand{\cH}{\mathcal{H}}
\newcommand{\cA}{\mathcal{A}}
\newcommand{\cD}{\mathcal{D}}
 \newcommand{\cL}{\mathcal L}
\DeclareMathOperator{\ldim}{Ldim}
\DeclareMathOperator{\VCdim}{VCdim}
\DeclareMathOperator{\ord}{ord}
\begin{document}

\title{Stability and List-Replicability for Agnostic Learners}
\author{
    Ari Blondal\thanks{McGill University, \texttt{ari.blondal@mail.mcgill.ca}} \and
    Shan Gao\thanks{McGill University, \texttt{shan.gao5@mail.mcgill.ca}} \and
    Hamed Hatami\thanks{McGill University, \texttt{hatami@cs.mcgill.ca}. Supported by an NSERC grant.} \and
    Pooya Hatami\thanks{Ohio State University, \texttt{pooyahat@osu.edu}}
}

\maketitle

\begin{abstract}
Two seminal papers--Alon, Livni, Malliaris, Moran~(STOC 2019)  and Bun, Livni, and Moran~(FOCS 2020)--established the equivalence between online learnability and globally stable PAC learnability in binary classification.  However, Chase, Chornomaz, Moran, and Yehudayoff (STOC 2024) recently showed that this equivalence does not hold in the agnostic setting. Specifically, they proved that in the agnostic setting, only finite hypothesis classes are globally stable learnable. Therefore, agnostic global stability is too restrictive to capture interesting hypothesis classes. 

To address this limitation, Chase \emph{et al.} introduced two relaxations of agnostic global stability.  In this paper, we characterize the classes that are learnable under their proposed relaxed conditions, resolving the two open problems raised in their work.

First, we prove that in the setting where the stability parameter can depend on the excess error (the gap between the learner's error and the best achievable error by the hypothesis class), agnostic stability is fully characterized by the Littlestone dimension. Consequently, as in the realizable case, this form of learnability is equivalent to online learnability.

As part of the proof of this theorem, we strengthen the celebrated result of Bun \emph{et al.} by showing that classes with infinite Littlestone dimension are not stably PAC learnable, even if we allow the stability parameter to depend on the excess error.

For the second relaxation proposed by Chase \emph{et al.}, we prove that only finite hypothesis classes are globally stable learnable even if we restrict the agnostic setting to distributions with small population loss. 
\end{abstract}

\section{Introduction}

We follow the standard PAC learning framework for \emph{binary classification} as, for example, described in \cite{shalev2014understanding}. In this model, a learner receives a sample of i.i.d. \emph{examples} from an unknown distribution $\cD$ over $X \times \set{0,1}$, where $X$ is the domain set, and $\set{0,1}$ represents the two possible \emph{labels} in binary classification. The learner's goal is to produce a \emph{hypothesis} $h:X \to \set{0,1}$ that minimizes the \emph{population loss}
\[\cL_\cD(h) \coloneqq \Pr_{(\bm{x},\bm{y})\sim \cD}[h(\bm{x}) \neq \bm{y}].\] 
Here, and throughout the paper, we use boldface letters to denote random variables and use the notation $(\bm{x},\bm{y})\sim \cD$ to express that $(\bm{x},\bm{y})$ is a random variable distributed according to $\mathcal{D}$.  

Formally, a \emph{learning rule} is a (randomized) function $\bm{\cA}$ that maps any sample 
$ S \in (X \times \set{0,1})^* \coloneqq \bigcup_{n=0}^\infty (X \times \set{0,1})^n$ to a hypothesis $\bm{\cA}(S) \in \set{0,1}^X$.  Thus, for any given sample $S$, $\bm{\cA}(S)$ is a random variable taking values in $\set{0,1}^X$.   

Throughout this paper, all learning rules are assumed to be randomized. We consistently use $X$ to denote the domain, $\set{0,1}$ to represent the two possible labels, and $\cD$ always refers to a distribution over $X \times \set{0,1}$. For an integer $n>0$, we use $[n]$ to denote the set $\{1,\dots, n\}$.

Given a \emph{hypothesis class} $\cH \subseteq \set{0,1}^X$, the goal of PAC (Probably Approximately Correct) learning is for the learner to produce, with high probability, a hypothesis whose population loss is close to the best achievable within $\cH$, defined as 
\[\cL_\cD(\cH) \coloneqq \inf_{h\in \cH} \cL_\cD(h).\]
A class $\cH$ is \emph{PAC learnable} if there is a learning rule $\bm{\cA}$ and a function $n(\epsilon, \delta)$ such that for any $\epsilon, \delta > 0$,
\begin{equation}\label{eq:PAC_Learning}
    \Pr_{\bm{S} \sim \cD^n} [\cL_\cD(\bm{\cA}(\bm{S})) \leq \cL_\cD(\cH) + \epsilon] \geq 1 - \delta \ \text{ where } n=n(\epsilon,\delta).
\end{equation}
PAC learning is studied in the \emph{realizable case}, where we assume $\cL_\cD(\cH) = 0$, and the \emph{agnostic case}, where  $\cL_\cD(\cH) > 0$.

\paragraph{Replicability and Global Stability.} Replicability is a fundamental principle of the scientific method. A study is replicable if it consistently yields the same results when repeated with new data drawn from the same distribution or source. In recent years, machine learning has seen a growing need to address the replication crisis~\cite{ball2023ai, baker20161}.  Impagliazzo, Lei, Pitassi, and Sorrel~\cite{ILPS22} initiated a formal theoretical framework for studying replicability in machine learning. Since their work, a rapidly growing body of research has emerged that introduced various notions of replicability. These works and subsequent research showed that many of these notions of replicability are essentially equivalent. Furthermore, they established deep connections to other foundational concepts in learning theory, such as differential privacy~\cite{chase2023replicabilitystabilitylearning, bun2023stability, kalavasis2023statistical,ghazi2021user, chase2023local}.   Additionally, a growing body of work has explored replicability in many data analysis and learning settings~\cite{ILPS22, bun2023stability, karbasi2023replicability, esfandiari2023replicable, Esfandiarietal23, eaton2023replicable, kalavasis2024replicable, kalavasis2023statistical}.

In this paper, we focus on the notion of replicability where the learning algorithm is expected to often produce the same predictor when applied to two independent and identically distributed inputs. This concept was first introduced under the term \emph{global stability} in~\cite{BLM20} and has since been refined and explored in subsequent works~\cite{ghazi2021user, kalavasis2023statistical, chase2023replicabilitystabilitylearning, chase2023local}. We start by defining global stability. 



\begin{restatable}[$\rho$-Global Stability, \cite{chase2023local}]{definition}{stabledef}
\label{def:stability}
    Given a function $\rho:(0,1)\rightarrow (0,1)$, a learning rule $\bm{\cA}$ is a \emph{$\rho$-global stable learner} for a hypothesis class $\cH$ if the following holds. 
    For every $\epsilon>0$, there exists $n=n(\epsilon)$ such that for every realizable distribution $\cD$, there exists a hypothesis $h$ satisfying 
\[\cL_\cD(h)\leq \epsilon\] 
and 
\begin{equation}
\label{eq:stable}
\Pr_{\substack{\bm{S}\sim \cD^n }}[\bm{\cA}(\bm{S})=h] \geq \rho(\epsilon). 
\end{equation}

Similarly, we call $\bm{\cA}$ a \emph{$\rho$-global stable agnostic learner} for $\cH$, if there exists $n=n(\epsilon)$ such that for every distribution $\cD$ on $X\times \set{0,1}$, there exists a hypothesis $h \in \set{0,1}^X$ that satisfies \eqref{eq:stable} and \begin{equation}\label{eq:excesserror}
\cL_\cD(h)\leq \cL_\cD(\cH)+\epsilon.
\end{equation} 
\end{restatable}

To simplify terminology, we use the term \emph{$\rho$-global stable} to describe a hypothesis class $\cH$ with a \emph{$\rho$-global stable learner}. Likewise, we call $\cH$ \emph{agnostically $\rho$-global stable} if it has a $\rho$-global stable agnostic learner.

\begin{definition}[Global Stability, \cite{BLM20}]\label{def:globalstability}
We say that a hypothesis class $\cH$ is \emph{globally stable} if it is $\rho$-global stable for a fixed constant $\rho \in (0,1)$.  Similarly, a hypothesis class $\cH$ is \emph{agnostically globally stable} if it is agnostically $\rho$-stable for such a constant.  
\end{definition}
In short, global stability requires the stability parameter in~\Cref{eq:stable} to be uniform, meaning it must not depend on $\epsilon$.

Bun, Livni, and Moran~\cite{BLM20} showed that in the realizable setting, global stability is fully characterized by bounded Littlestone dimension (\Cref{def:LDim}). This result, combined with the seminal works of Littlestone~\cite{littlestone1988learning} and Alon \emph{et al.}~\cite{Alon_22_private_and_online}, shows that global stability is equivalent to online learnability and approximately private learnability, as well as some other notions of replicability~\cite{kalavasis2023statistical,ghazi2021user,bun2023stability}.

In contrast, the agnostic setting reveals a different picture.  Chase, Chornomaz, Moran, and Yehudayoff~\cite{chase2023local} proved the following characterization using a topological approach.  

\begin{theorem}[\cite{chase2023local}] 
\label{thm:Chase_ag}
A hypothesis class $\cH$ is agnostically globally stable if and only if $\cH$ is finite. 
\end{theorem}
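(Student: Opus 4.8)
\emph{The easy direction.} Suppose $\cH=\{f_1,\dots,f_m\}$ is finite. Let $\bm{\cA}$ run empirical risk minimization over $\cH$, breaking ties by index. By Hoeffding's inequality and a union bound over the $m$ hypotheses, taking $n=n(\epsilon)=\Theta(\epsilon^{-2}\log m)$ (independent of $\cD$) guarantees that the event $G$ that all $m$ empirical losses lie within $\epsilon/3$ of their population values has probability at least $1-\tfrac1{2m}$, and conditioned on $G$ the output is one of $f_1,\dots,f_m$ and has population loss at most $\cL_\cD(\cH)+\epsilon$. Taking $h$ to be the mode of $\bm{\cA}(\bm{S})$ conditioned on $G$, we get $\Pr_{\bm{S}\sim\cD^n}[\bm{\cA}(\bm{S})=h]\ge \Pr[G]/m\ge \tfrac1{2m}$ together with $\cL_\cD(h)\le\cL_\cD(\cH)+\epsilon$, so $\bm{\cA}$ witnesses that $\cH$ is agnostically $\rho$-globally stable with the constant $\rho=\tfrac1{2m}$.

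\emph{The hard direction: reduction to a canonical infinite structure.} Assume $\cH$ is infinite and, for contradiction, that $\bm{\cA}$ is an agnostically $\rho$-globally stable learner for a constant $\rho$; fix $m=\lfloor 1/\rho\rfloor+1$. If $\ldim(\cH)=\infty$ then $\cH$ is not even realizably globally stable and we are done, so assume $\ldim(\cH)<\infty$. Since $\{0,1\}^X$ is compact in the product topology and $\cH$ is infinite, pick distinct $h_1,h_2,\dots\in\cH$ converging pointwise to some $h_\infty$. Pointwise convergence forces each coordinate to lie in only finitely many of the difference sets $\{x:h_i(x)\ne h_\infty(x)\}$; combining this with a Ramsey-type argument — using that the one homogeneous pattern obstructing a "disjoint" extraction, namely thresholds, is already excluded by $\ldim(\cH)<\infty$ — we may pass to an infinite sub-domain $X'=\{x_1,x_2,\dots\}$ on which the $h_i$ restrict, after the fixed relabeling sending $h_\infty|_{X'}$ to $\mathbf{0}$, to the singleton pattern $h_i(x_j)=\mathbf{1}[i=j]$, with the remaining elements of $\cH|_{X'}$ under control. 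Since $\cL_\cD(\cH)=\cL_\cD(\cH|_{X'})$ for $\cD$ supported on $X'$, and since projecting a near-optimal heavy hypothesis to $X'$ keeps it near-optimal and heavy, it suffices to defeat $\bm{\cA}$ on distributions supported on $X'$.

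\emph{Hard instances and topological finish.} For $\epsilon>0$ with $n=n(\epsilon)$, a noise level $\gamma$ to be chosen, and $p$ in the simplex $\Delta^{m-1}$, let $\cD_p$ be supported on $\{x_1,\dots,x_m\}$ with $x$-marginal $p$ and conditional label probabilities at $\tfrac12\pm\gamma$ in a fixed pattern tailored to $h_1,\dots,h_m$; calibrate $\gamma=\gamma(\epsilon,n,m)$ so that (i) $\cL_{\cD_p}(\cH)$ exceeds the Bayes error of $\cD_p$ by only $O(\gamma)\ll\epsilon$, so the set of $\epsilon$-optimal hypotheses is a small weighted-Hamming ball around the Bayes classifier $h^\star_p$; (ii) these balls are pairwise disjoint when $p$ is concentrated near distinct vertices of $\Delta^{m-1}$; and (iii) $n$ i.i.d.\ samples cannot distinguish $\cD_p$ from $\cD_{p'}$ when $\|p-p'\|$ is small. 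On each $\cD_p$ the output distribution of $\bm{\cA}$ has at most $\lfloor 1/\rho\rfloor$ atoms of mass $\ge\rho$, and stability forces one of them into the $\epsilon$-optimal ball of $\cD_p$; assigning to $p$ the index of such an atom yields a coloring of $\Delta^{m-1}$ by fewer than $m$ colors which, by (iii), cannot switch abruptly, yet by (ii) must refine the partition of $\Delta^{m-1}$ into the $m$ vertex-basins — impossible by a Sperner/KKM-type argument. This contradiction shows no infinite class is agnostically globally stable.

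\emph{The main obstacle.} Everything hinges on the calibration above. Separating the $\epsilon$-optimal balls of distinct instances, so that $1/\rho$ fixed answers cannot serve all of them, pushes $\gamma$ up relative to $\epsilon$; making those instances statistically indistinguishable from $n(\epsilon)$ samples, so that the learner's behavior is governed by an honest coloring of the simplex rather than an arbitrary sample-dependent rule, pushes $\gamma$ down relative to $1/n(\epsilon)$. Reconciling these (which is possible precisely because $\epsilon$ may be taken as small as we like while $n(\epsilon)$ is then merely some fixed finite number), and converting the resulting soft combinatorial constraint into a genuine contradiction via a topological fixed-point theorem rather than a naive counting or information-theoretic bound, is the technical heart of the argument — and the reason a topological approach is essential here.
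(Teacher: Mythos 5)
This theorem is cited in the paper (from Chase, Chornomaz, Moran, and Yehudayoff) rather than proved; the paper only remarks that the original proof is ``topological.'' There is therefore no in-paper proof to compare against, so the following assesses your proposal on its own terms.

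The easy direction is correct: ERM over a finite class, uniform convergence via Hoeffding and a union bound, and taking the modal output conditioned on the good event gives a $\rho$-stable agnostic learner with $\rho = \tfrac{1}{2m}$.

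The hard direction has a genuine gap in the ``reduction to a canonical infinite structure.'' You want to pass to a countable subdomain $X'$ on which the extracted sequence looks like $h_i(x_j)=\mathbf{1}[i=j]$ relative to $h_\infty$, and you justify this with ``pointwise convergence forces each coordinate to lie in only finitely many difference sets'' plus ``thresholds are excluded by $\ldim(\cH)<\infty$.'' Point-finiteness is real, but it does not by itself yield a pairwise-disjoint subfamily of the difference sets $D_i=\{x: h_i(x)\ne h_\infty(x)\}$: if some $D_i$ is cofinite (which pointwise convergence does not forbid when $X$ is infinite — it forbids each \emph{point} being in infinitely many $D_j$, not each $D_i$ being infinite), you cannot choose representatives avoiding it. Excluding the threshold pattern is not the only obstruction to a disjoint transversal, and you give no Ramsey-type lemma that actually produces one under the hypotheses you have. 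Even granting the extraction, the phrase ``with the remaining elements of $\cH|_{X'}$ under control'' is doing a lot of silent work: the benchmark in the agnostic definition is $\cL_\cD(\cH)$, so any hypotheses in $\cH$ that trace nontrivially onto $X'$ beyond your chosen $h_1,\ldots,h_m$ can lower $\cL_\cD(\cH)$ and wreck calibration item (i). This needs a concrete lemma, not a parenthetical.

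The ``topological finish'' is an outline rather than an argument. Items (i)--(iii) are asserted without a calibration; more importantly, the step ``assigning to $p$ the index of such an atom yields a coloring of $\Delta^{m-1}$ ... which, by (iii), cannot switch abruptly'' does not follow from stability. Global stability guarantees a heavy atom for each fixed $\cD_p$, but the identity of that atom can change discontinuously in $p$; statistical indistinguishability of $\cD_p$ and $\cD_{p'}$ bounds the total-variation distance of the output distributions, which does not make a chosen atom-index well-defined, measurable, or locally constant. You would need to show explicitly that the \emph{set} of heavy atoms varies in a way compatible with a KKM covering (e.g., via closed covers defined by ``the atom $h$ has mass $\ge\rho$ and is $\epsilon$-optimal''), and that the faces are covered appropriately. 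As written, the coloring argument and the appeal to Sperner/KKM are not substantiated. Your own closing paragraph concedes the calibration is the whole difficulty; since that calibration is precisely what is missing, the hard direction is not established.
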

This striking result shows that agnostic global stability is far more restrictive than its realizable counterpart.  Since finite classes are trivially global stable, \Cref{thm:Chase_ag} shows that agnostic global stability is too restrictive to lead to interesting learnability phenomena. To remedy this, Chase \emph{et al.}~\cite{chase2023local} introduced two relaxations of agnostic global stability and proposed a study of which hypothesis classes can be learned under these relaxed notions of stability.

\paragraph{Excess-error dependent stability.}  The first suggested relaxation, coincides with our definition of $\rho$-global stability in \Cref{def:stability}. A hypothesis class $\cH$ is called \emph{excess-error dependent stable} if it is agnostically $\rho$-global stable for some $\rho:(0,1) \to (0,1)$. Here, the excess-error refers to the parameter $\epsilon$ in \cref{eq:excesserror}.

Our main theorem provides a complete characterization of such classes. We show that a hypothesis class is agnostically $\rho$-global stable learnable for some $\rho$ if and only if it has a bounded Littlestone dimension. 
We denote the Littlestone dimension of $\cH$ as $\ldim(\cH)$.

\begin{restatable}[Main Theorem]{theorem}{mainthm}
\label{thm:main}
Let $\cH$ be a binary concept class. 
\begin{itemize}
    \item[(i)] If $\ldim(\cH)=\infty$, then $\cH$ is not $\rho$-global stable for any $\rho:(0,1) \to (0,1)$.
    \item[(ii)] If $\ldim(\cH)<\infty$, then $\cH$ is agnostically $\rho$-global stable for some $\rho:(0,1) \to (0,1)$.
\end{itemize}
\end{restatable}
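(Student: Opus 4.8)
The plan is to prove the two directions separately, each building on known machinery.

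For part (i), the goal is to strengthen the Bun--Livni--Moran lower bound: if $\ldim(\cH)=\infty$ then no learner can be $\rho$-global stable even in the weaker sense where $\rho$ may decay with $\epsilon$. First I would recall that infinite Littlestone dimension means that for every $d$ there is a complete binary Littlestone tree of depth $d$ shattered by $\cH$. The key observation is that $\rho$-global stability with parameter $\rho(\epsilon)$ still forces, for each fixed $\epsilon$, a \emph{single} hypothesis $h$ to be output with probability at least $\rho(\epsilon)>0$; so restricted to distributions realizable by a fixed shattered tree of depth $d$, the learner behaves like a global stable (constant-parameter) learner with constant $\rho(\epsilon)$. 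I would then run the standard argument: a $\rho_0$-global stable realizable learner yields, via an averaging/boosting-style argument over the uniform distribution on the $2^d$ branches of the tree, a ``stable'' hypothesis that, combined with a packing/counting bound, contradicts $d$ being too large relative to $1/\rho_0$ — concretely, there are only finitely many hypotheses that can each be hit with probability $\ge \rho_0$ on ``enough'' branches, but a shattered tree of depth $d\gg \log(1/\rho_0)$ forces too much diversity. This is essentially re-deriving the finite-Littlestone necessity of \cite{BLM20} while tracking that $\rho$ is allowed to depend on $\epsilon$ but, crucially, not on $d$; since $d\to\infty$ we get a contradiction for any fixed $\epsilon$ (say $\epsilon=1/4$). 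The main subtlety here is bookkeeping: making precise how an agnostic $\rho$-global stable learner restricts to a realizable $\rho(\epsilon)$-global stable learner on tree-induced distributions, and confirming the $2^d$-branch packing argument genuinely uses only the $\epsilon$-dependent (not $d$-dependent) constant.

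For part (ii), assume $\ldim(\cH)=d<\infty$. The plan is to exhibit an agnostic $\rho$-global stable learner with $\rho$ depending on $\epsilon$. The natural route: first reduce the agnostic problem to the realizable one at a coarse scale. Given target excess error $\epsilon$, the learner draws a large sample, and we use the fact that bounded Littlestone dimension implies bounded VC dimension, so uniform convergence holds; the empirical risk minimizer over $\cH$ achieves population loss within $\epsilon/2$ of $\cL_\cD(\cH)$ with high probability. But ERM alone is not stable. Instead, following the realizable-to-global-stable construction of \cite{BLM20} (the Littlestone-dimension-recursion / ``stable histogram''-type algorithm), I would discretize: cover the relevant part of $\cH$ (restricted to a large sample) by an $\epsilon$-net in the empirical metric — of size bounded by a function of $d$ and $\epsilon$ via the Sauer--Shelah lemma — and then apply a stable mode-finding procedure (e.g. the \cite{BLM20} global-stability algorithm applied to the finite net of candidate hypotheses, whose stability constant is $\ge$ some $\rho_0(d,\epsilon)>0$). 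Concretely: run the realizable $\rho$-global stable learner of \cite{BLM20} for $\cH$ with accuracy $\epsilon/4$ on the sub-sample relabeled by a near-optimal $h^\star\in\cH$ — except we don't have $h^\star$; so instead one runs it on the raw sample and argues via the agnostic-to-realizable reduction that the mode hypothesis it isolates has small \emph{empirical} excess loss with good probability, then transfers to population loss by uniform convergence. Collecting the probability that (a) uniform convergence holds and (b) the stable learner hits its mode gives $\rho(\epsilon)>0$. The main obstacle in part (ii) is precisely this agnostic-to-realizable reduction for stability: global stability is a statement about \emph{outputting a fixed hypothesis with constant probability}, and naive reductions (subsample-and-relabel) destroy this because the relabeling function itself is sample-dependent; I expect the fix is to use that over a net of size $N(d,\epsilon)$ there must be \emph{some} net element that is both near-optimal and output with probability $\ge 1/N$ by a derandomized/heavy-hitter variant of the learner, which is exactly the kind of argument \cite{BLM20} and \cite{chase2023local} use in the realizable case and which survives the passage to the agnostic setting once we restrict attention to a finite empirical net.
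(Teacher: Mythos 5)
Your proposal for part~(i) has a genuine gap that the paper itself flags. You write that for a fixed $\epsilon$, a $\rho$-global stable learner ``behaves like a global stable (constant-parameter) learner with constant $\rho(\epsilon)$'' and that one can then ``run the standard argument'' of Bun--Livni--Moran. This does not work: fixing one value of $\epsilon$ only gives a \emph{weak} stable learner (accuracy $1/4$, say), whereas the BLM lower bound is a statement about learners that are $\rho_0$-stable \emph{simultaneously for all $\epsilon$}; its known proof passes through differential privacy and needs learning to arbitrary accuracy. The paper explicitly states that Theorem~\ref{thm:main}~(i) strengthens BLM precisely because BLM ``only overrules $\rho$-global stability when $\rho > 0$ is a fixed constant.'' Your proposed substitute --- a ``packing/counting bound'' over the $2^d$ branches showing $d \lesssim \log(1/\rho_0)$ --- is not a valid argument as stated: each branch induces its own distribution, the stable hypothesis is allowed to differ from branch to branch, and nothing forces these per-branch stable hypotheses to collide; no cardinality contradiction falls out. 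To make the ``fix $\epsilon=1/4$'' idea work one must boost the resulting weak private learner to a strong one (this is exactly the independent Hopkins--Moran route via DP boosting, which the paper cites). The paper's own proof is a different, direct argument: reduce to list replicability (Theorem~\ref{thm:stability-vs-listrep}), boost the confidence to $1-n^{-10}$ (Lemma~\ref{lem:list_samplesize}), use the threshold dimension and a hypergraph Ramsey argument (Proposition~\ref{prop:ldim_matrix}, Claim~\ref{claim:SetX}) to make the output probabilities depend only on the order type, detect a probability jump of size $\geq \frac{1}{2n}$ (Claim~\ref{claim:2}), and then derive a contradiction via a volume/packing bound on convex combinations of the short list (Claims~\ref{claim:small_num_functions} and~\ref{claim:intersection}). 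None of this machinery is present in your sketch.

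For part~(ii) your instincts are closer but the argument still has a hole at exactly the place you identify. You correctly observe that the obstacle is sample-dependent relabeling, but your proposed fix (an empirical $\epsilon$-net plus a ``derandomized/heavy-hitter variant'') does not resolve it, since the net itself depends on the sample. The paper's actual fix is much more elementary and avoids nets entirely: fix a near-optimal $h^*\in\cH$ with $\gamma = \cL_\cD(h^*)\leq \cL_\cD(\cH)+\epsilon/2$, let $\cD'$ be $\cD$ conditioned on agreement with $h^*$ (so $\cD'$ is realizable), run the BLM learner unchanged on the raw sample, and observe that a sample drawn from $\cD^n$ lands entirely in the support of $\cD'$ with probability at least $(1-\gamma)^n \geq 4^{-n}$. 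On that event the BLM realizable guarantee kicks in, giving a hypothesis $h$ output with probability $\geq \rho_{\mathrm{BLM}}$, so overall $\Pr_{\bm{S}\sim\cD^n}[\bm{\cA}(\bm{S})=h]\geq 4^{-n}\rho_{\mathrm{BLM}}$, and $\cL_\cD(h)\leq\cL_{\cD'}(h)+\cL_\cD(h^*)\leq\cL_\cD(\cH)+\epsilon$ (Lemma~\ref{lem:For_ii}). No uniform-convergence step, no net, no heavy-hitter machinery. You should also note the paper's minor but necessary patch: this bound degenerates when $\cL_\cD(\cH)$ is near $1$, which is handled by outputting the all-$0$ or all-$1$ hypothesis with probability $1/3$ each; your sketch does not address this edge case.
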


Note that \Cref{thm:main}~(i) states that if $\ldim(\cH)=\infty$, then even in the realizable case, we cannot achieve $\rho$-global stability for any $\rho:(0,1) \to (0,1)$. This strengthens the result of Bun, Livni, and Moran~\cite{BLM20}, which only overrules $\rho$-global stability when $\rho > 0$ is a fixed constant. 

Shortly after a draft of this paper was posted online, Hopkins and Moran~\cite{hopkins25} communicated to us that in an independent work, they have proved an equivalent statement to Theorem 4
by utilizing the known relation between stability and differential privacy. They use a $\rho$-global stable learner to achieve weak DP learning, which in turn is boosted to a strong DP learner. It is well known that strong DP learning is achievable if and only if the Littlestone dimension is finite~\cite{Alon_22_private_and_online}. 
In contrast, our proof is direct and relies solely on notions of stability and list-replicability.

Combined with the work of Alon \emph{et al.}~\cite{Alon_22_private_and_online}, \Cref{thm:main} implies that agnostic $\rho$-global stability is equivalent to global stability, as well as to approximate private learnability and online learnability.

\paragraph{Class-error dependent stability.}  In many practical learning scenarios, while we cannot assume realizability, we may have prior knowledge that the hypothesis class performs reasonably well. This corresponds to a more restricted version of agnostic learning, where the learning task is limited to distributions $\cD$ that satisfy $\cL_\cD(\cH) \leq \gamma$ for some small $\gamma>0$.

\begin{definition}[Class-error Dependent Stability, \cite{chase2023local}]
\label{def:class_error}
Let $\gamma \in [0,1]$ be a fixed constant. We say $\cH$ is $\gamma$-agnostically globally stable if there exists a  constant $\rho>0$ and a learning rule $\bm{\cA}$ such that the following holds. For every $\epsilon>0$, there exists $n=n(\epsilon)$ such that for every distribution $\cD$ with $\cL_\cD(\cH)\leq \gamma$, there exists a hypothesis $h$ satisfying  
\[\cL_\cD(h)\leq \cL_\cD(\cH)+\epsilon,\]
and
\[ 
\Pr_{ \bm{S} \sim \cD^n} \left[\bm{\cA} (\bm{S})=h \right] \geq \rho. 
\] 
\end{definition}

The case $\gamma=0$ corresponds to the realizable case, where Bun, Livni, and Moran~\cite{BLM20} show that global stability is fully characterized by bounded Littlestone dimension. On the other hand, $\gamma=1$ corresponds to the agnostic case, where \Cref{thm:Chase_ag} shows that only finite classes are agnostically globally stable.

Chase \emph{et al.}~\cite{chase2023local} ask which hypothesis classes are $\gamma$-agnostically globally stable for all sufficiently small $\gamma$. Our next theorem shows that the realizable case, $\gamma=0$, is the only scenario in which infinite hypothesis classes can be $\gamma$-agnostically globally stable. 
Therefore, the relaxation of agnostic global stability to $\gamma$-agnostic global stability does not lead to any generalization, as only finite hypothesis classes can be $\gamma$-agnostically globally stable if $\gamma > 0$. 

To prove our theorem, we show that agnostic global stability reduces to $\gamma$-agnostic global stability, for any arbitrary $\gamma>0$. 

\begin{theorem}\label{thm:class-error-dependent}
If a class $\cH \subseteq \set{0,1}^X$ is $\gamma$-agnostically globally stable for some $\gamma>0$, then $\cH$ is finite. 
\end{theorem}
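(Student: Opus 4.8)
The plan is to prove that $\gamma$-agnostic global stability, for \emph{any} fixed $\gamma>0$, already implies full agnostic global stability; once this is established, \Cref{thm:Chase_ag} immediately forces $\cH$ to be finite. As a preliminary observation, a $\gamma$-agnostically globally stable learner is in particular a (constant-$\rho$) globally stable learner in the realizable case, since realizable distributions satisfy $\cL_\cD(\cH)=0\le\gamma$; hence $\ldim(\cH)<\infty$ (equivalently, by \Cref{thm:main}(i), $\cH$ could not otherwise be $\rho$-global stable for any $\rho$). We may therefore freely invoke uniform convergence for $\cH$ and realizable global stability of $\cH$ as auxiliary tools.

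The engine of the reduction is a domain extension that manufactures a \emph{universal point}. Fix a $\rho$-global stable agnostic learner $\bm\cA$ that succeeds on every $\cD$ with $\cL_\cD(\cH)\le\gamma$, and let $\cD$ be an arbitrary distribution on $X\times\set{0,1}$. Set $\hat X:=X\sqcup\set{z}$, extend every $h\in\cH$ to $\hat h\in\set{0,1}^{\hat X}$ by declaring $\hat h(z):=0$, and let $\hat\cH:=\set{\hat h:h\in\cH}$, a class of the same cardinality as $\cH$. For a small parameter $\lambda>0$ consider $\hat\cD_\lambda:=(1-\lambda)\,\delta_{(z,0)}+\lambda\cdot\cD$ on $\hat X\times\set{0,1}$. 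Then $\cL_{\hat\cD_\lambda}(\hat h)=\lambda\,\cL_\cD(h)$ for every $h$, so $\cL_{\hat\cD_\lambda}(\hat\cH)=\lambda\,\cL_\cD(\cH)\le\lambda$; a learner holding samples from $\cD$ can simulate samples from $\hat\cD_\lambda$ with \emph{no} additional information (emit $(z,0)$ with probability $1-\lambda$, otherwise pass through a fresh example of $\cD$); and any $\hat g$ with $\cL_{\hat\cD_\lambda}(\hat g)\le\cL_{\hat\cD_\lambda}(\hat\cH)+\lambda\epsilon$ restricts to a hypothesis $\hat g|_{X}$ with $\cL_\cD(\hat g|_{X})\le\cL_\cD(\cH)+\epsilon$ (the $z$-contribution to $\cL_{\hat\cD_\lambda}(\hat g)$ being nonnegative), produced with the same probability. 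The crucial structural point is that $\hat\cD_\lambda$ is a \emph{fixed function of} $\cD$ — no sample-dependent relabeling enters — which is exactly what lets the final output stay concentrated on a single hypothesis; this is precisely why one dilutes against a distribution that every member of $\cH$ classifies correctly, rather than against, say, $\cD$ relabeled by an empirically chosen hypothesis, which would be unstable.

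Everything therefore reduces to the following claim: \emph{$\hat\cH$ is itself $\gamma_0$-agnostically globally stable for some $\gamma_0>0$}, given that $\cH$ is $\gamma$-agnostically globally stable (one then takes $\lambda:=\gamma_0$ above). Here the extra point $z$ begins to pay off. Given any $\hat\cD'$ on $\hat X\times\set{0,1}$ with $\cL_{\hat\cD'}(\hat\cH)\le\gamma_0$, decompose $\hat\cD'=q\,\nu_z+(1-q)\,\mu_X$ with $\nu_z$ supported on $\set{z}\times\set{0,1}$ and $\mu_X$ on $X\times\set{0,1}$. Since no member of $\hat\cH$ ever sees $z$, one gets $\cL_{\hat\cD'}(\hat h)=q\beta+(1-q)\,\cL_{\mu_X}(h)$ for an $h$-independent $\beta$, so that a near-optimal hypothesis for $\hat\cD'$ is exactly one that learns $\mu_X$ to excess error $\epsilon/(1-q)$, and the budget constraint reads $(1-q)\,\cL_{\mu_X}(\cH)\le\gamma_0$. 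If $q$ is bounded away from $1$ (choosing $\gamma_0\le\gamma/2$, say $q\le 1/2$), this forces $\cL_{\mu_X}(\cH)\le\gamma$, so $\bm\cA$ applies to $\mu_X$ and, being stable on the fixed distribution $\mu_X$, yields a concentrated near-optimal hypothesis. If $q$ is within $\epsilon$ of $1$, the excess-error budget $\epsilon/(1-q)$ exceeds $1$ and any fixed $\hat h_0$ is admissible and trivially concentrated. Combining this with the engine of the previous paragraph and then invoking \Cref{thm:Chase_ag} completes the proof.

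The main obstacle — and the heart of the argument — is the intermediate regime left open above: distributions $\hat\cD'$ on which $q$ is moderately but not overwhelmingly close to $1$, equivalently target distributions $\cD$ for $\cH$ whose class loss $\cL_\cD(\cH)$ lies strictly between $\gamma$ and $1-\epsilon$. There $\bm\cA$ offers no guarantee at all, and one checks that merely re-weighting the non-dummy part against $z$ does not help, since re-weighting leaves the \emph{intrinsic} class loss $\cL_{\mu_X}(\cH)$ unchanged. Bridging this regime seems to require extracting — from $\bm\cA$ together with $\ldim(\cH)<\infty$ — a hypothesis that is simultaneously (i) good enough on $\cD$ to serve as a denoising anchor, so that diluting $\cD$ against it pushes the class loss below $\gamma$ while preserving near-optimality, and (ii) itself concentrated, so that the anchor, hence the final output, does not depend on the sample; the subtlety is that the obvious candidates (empirical risk minimizers, or the output of $\bm\cA$ on crudely relabeled data) are respectively unstable or not near-optimal, so this denoising step is exactly where the real work lies.
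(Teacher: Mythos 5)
Your core idea — dilute $\cD$ toward a fixed example that every relevant hypothesis classifies correctly, thereby forcing the class loss below the threshold $\gamma$ — is exactly the paper's mechanism, and your structural observation that the anchor must be sample-independent to preserve concentration is correct and important. However, you attach the anchor to a \emph{new} domain point $z$ and pass to the extended class $\hat\cH$ over $\hat X = X \sqcup \{z\}$. This breaks the reduction, because the assumed $\gamma$-stable learner $\bm{\cA}$ operates on $\cH$ over $X$, not on $\hat\cH$ over $\hat X$; so you are forced to prove that $\hat\cH$ is itself $\gamma_0$-agnostically globally stable, and you correctly identify that the intermediate-$q$ regime (equivalently, $\cL_{\mu_X}(\cH)$ between $\gamma$ and roughly $1-\epsilon$) is a genuine gap with no available tool. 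You are also aiming at too strong a conclusion: you try to deduce agnostic global stability of all of $\cH$, when all that is needed to invoke \Cref{thm:Chase_ag} is agnostic global stability of \emph{some} infinite subclass.

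The fix — and the paper's actual argument — is to anchor at an \emph{existing} point $x^* \in X$, and to choose the label $b^*$ so that the subclass $\cH^* := \{h \in \cH : h(x^*)=b^*\}$ is infinite (possible since $\cH$ is infinite). With $\gamma' := \min\{\gamma, \tfrac{1}{10}\}$, the dilution $\cD' := \gamma'\cD + (1-\gamma')\mathbf{1}_{(x^*,b^*)}$ is a distribution over $X$, and it satisfies $\cL_{\cD'}(\cH) \le \gamma'$ for \emph{every} $\cD$ — no intermediate regime — because any $h\in\cH^*$ already classifies $(x^*,b^*)$ correctly. Thus $\bm{\cA}$ applies directly. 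The observation that does the real work, and which your $z$-dummy destroys (since all of $\hat\cH$ agrees at $z$ by fiat, so agreement there carries no information), is this: the $\rho$-probable near-optimal hypothesis $h^\dagger$ for $\cD'$ must satisfy $h^\dagger(x^*)=b^*$, since otherwise $\cL_{\cD'}(h^\dagger) \ge 1-\gamma' > \gamma' + \epsilon\gamma' \ge \cL_{\cD'}(\cH) + \epsilon\gamma'$. Once $h^\dagger(x^*)=b^*$, one has $\cL_{\cD'}(h^\dagger) = \gamma'\cL_\cD(h^\dagger)$ and $\cL_{\cD'}(\cH) = \gamma'\cL_\cD(\cH^*)$, giving $\cL_\cD(h^\dagger) \le \cL_\cD(\cH^*) + \epsilon$. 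This shows the infinite subclass $\cH^*$ is agnostically globally stable, contradicting \Cref{thm:Chase_ag}. Your preliminary invocation of $\ldim(\cH)<\infty$ is correct but unnecessary; the paper's proof uses only \Cref{thm:Chase_ag}.
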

\begin{proof}
Assume towards a contradiction that an infinite $\cH \subseteq \set{0,1}^X$ is $\gamma$-agnostically globally stable for some $\gamma>0$, and let $\rho>0$, $\bm{\cA}$, and $n(\cdot)$ be as in \Cref{def:class_error}. 

Pick any $x^* \in X$, and let $b^*\in \set{0,1}$ be such that the subclass $\cH^* \coloneqq \set{h \in \cH: h(x^*) = b^*}$ is infinite.  Let $\gamma' \coloneqq \min \set{\gamma,\frac{1}{10}}$. We obtain a contradiction with \Cref{thm:Chase_ag} by showing that $\cH^*$ is agnostically globally stable despite being infinite.

Given $\epsilon>0$ and access to a distribution $\cD$ on $X \times \set{0,1}$, let $n \coloneqq n(\epsilon \gamma' )$, and define the distribution 
\[ \cD' \coloneqq \gamma' \cD + (1-\gamma') \1_{(x^*,b^*)},\] 
which corresponds to sampling from $\cD$ with probability $\gamma'$, and sampling $(x^*,b^*)$ with probability $1-\gamma'$. Consider the learning rule $\bm{\cA}'$ described as follows: 

    \begin{enumerate}
        \item Given a sample $\bm{S} \sim \cD^n$, independently replace each example in $\bm{S}$ with $(x^*,b^*)$ with probability $1-\gamma'$. Let $\bm{T}$ denote the resulting modified sample. 
        \item Output $\bm{\cA}(\bm{T})$.
    \end{enumerate}
Note that $\bm{\cA}'(\bm{S})$ with $\bm{S} \sim \cD^n$ has the same distribution as $\bm{\cA}(\bm{T})$ with $\bm{T} \sim (\cD')^n$. 

Since every $h \in \set{0,1}^X$ with $h(x^*)=b^*$, satisfies $\cL_{\cD'}(h) = \gamma' \cL_{\cD}(h) \le \gamma'$, we have $\cL_{\cD'}(\cH) \le \gamma'$. Therefore, by our choice of  $n \coloneqq n(\epsilon \gamma' )$ and our assumption of the $\gamma$-agnostic global stability of $\bm{\cA}$ on 
$\cH$,  there exists $h^* \in \set{0,1}^X$ with 
\begin{equation}
\label{eq:class-error}
\cL_{\cD'}(h^*)\leq \cL_{\cD'}(\cH)+  \epsilon\gamma' \ \ \text{ and } \ \  \Pr_{\bm{S} \sim \cD^n} \left[\bm{\cA}' (\bm{S})=h^* \right] = \Pr_{ \bm{T} \sim (\cD')^n} \left[\bm{\cA} (\bm{T})=h^* \right]\geq \rho. 
\end{equation}
If $h^*(x^*) \neq b^*$, then since $\gamma' < \frac{1}{10}$, we  have
\[\cL_{\cD'}(h^*) \ge 1-\gamma' >\gamma'+  \epsilon\gamma'  \ge \cL_{\cD'}(\cH)+ \epsilon\gamma',\]
which contradicts the first inequality in \Cref{eq:class-error}.  Therefore, $h^*(x^*)=b^*$, and consequently, we have $\cL_{\cD'}(h^*)=\gamma' \cL_{\cD}(h^*)$. Furthermore, $\cL_{\cD'}(\cH)=\gamma' \cL_{\cD}(\cH^*)$.   Replacing these in \Cref{eq:class-error} shows 
\[\cL_{\cD}(h^*)\leq \cL_{\cD}(\cH^*)+  \epsilon \ \ \text{ and } \ \  \Pr_{\bm{S} \sim \cD^n} \left[\bm{\cA}' (\bm{S})=h^* \right]  \geq \rho. 
  \]
Therefore, the infinite class $\cH^*$ is agnostically globally stable, contradicting  \Cref{thm:Chase_ag}.
\end{proof}

\paragraph{Relation to list replicability.}

In learning theory, global stability is more useful when paired with a guarantee that the learner typically outputs a hypothesis with a low population loss.   The definition of global stability (\Cref{def:globalstability}) only requires that the learner outputs a low-error hypothesis with some probability $\rho > 0$, and the learner can output hypotheses with large population loss with probability $1-\rho$. However,  it is known that global stability implies bounded VC dimension~\cite{Alon_22_private_and_online}, and assuming bounded VC dimension, this can be easily remedied. The learner can estimate the population loss of its output by comparing it to that of the hypothesis produced by the empirical risk minimization (ERM) rule. If the population loss is unsatisfactory, the learner can fall back on the ERM hypothesis instead.

Next, we introduce a seemingly stronger notion of replicability, originally proposed by Chase \emph{et al.}~\cite{chase2023replicabilitystabilitylearning}, known as \emph{list replicability}. This notion strengthens the standard guarantee by requiring that, with high probability, the output hypothesis lies within a small list of hypotheses, each with low population loss.


\begin{definition}[List Replicability, \cite{chase2023replicabilitystabilitylearning}]
\label{def:List}
Given a function $L: (0,1) \rightarrow \N$, we say that a learner $\bm{\cA}$ is an \emph{$L$-list-replicable learner} for a hypothesis class $\cH$ if the following holds. For every $\epsilon,\delta>0$, there exists $n=n(\epsilon, \delta)$  such that for every realizable distribution $\cD$,  there exists a list of $L=L(\epsilon)$ hypotheses $h_1,\ldots, h_L$ satisfying 
\[\cL_\cD(h_i)\leq \epsilon\ \text{ for all } 1\le i \le L\] 
and 
\begin{equation}
    \label{eq:likely_in_list}
    \Pr_{\bm{S}\sim \cD^n}[\bm{\cA}(\bm{S}) \in \{h_1, \ldots, h_L\}] \geq 1 - \delta. 
\end{equation}

Similarly, $\bm{\cA}$ is an \emph{agnostic $L$-list-replicable learner} for $\cH$ if there exists $n = n(\epsilon, \delta)$ such that for every distribution $\cD$ on $X \times Y$, there exists a list of $L=L(\epsilon)$ hypotheses $h_1, \ldots, h_L \in \cH$ satisfying \eqref{eq:likely_in_list} and 
\[ \cL_\cD(h_i) \leq \cL_\cD(\cH) + \epsilon \ \text{ for all } 1\le i \le L.\] 
\end{definition}

Similar to stability, we define the notions of \emph{global list-replicability} and \emph{agnostic global list-replicability} to describe the uniform case where the learner in \Cref{def:List} exists for a fixed constant $L>0$ independent of $\epsilon$.\footnote{In the literature, what we refer to as global list-replicability is simply called list-replicability.}  

It is worth noting that \Cref{eq:likely_in_list} easily implies stability, as there must exist some  $i\in  [L]$ with 

\[ \Pr_{\bm{S}\sim \cD^n}[\bm{\cA}(\bm{S}) =h_i] \ge \frac{1-\delta}{L}.\]

Chase, Moran, and Yehudayoff~\cite{chase2023replicabilitystabilitylearning} showed that the converse is also true: global stability implies global list-replicability. 

\begin{theorem}[\cite{chase2023replicabilitystabilitylearning}]
\label{thm:List_vs_Stab}
For any fixed constant $L>0$, a hypothesis class is $L$-list-replicable if and only if it is $\rho$-global stable for all $\rho<\frac{1}{L}$. 
\end{theorem}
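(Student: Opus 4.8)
The plan is to prove the two implications separately; the forward one (list-replicability $\Rightarrow$ stability) is a short pigeonhole argument, while all the substance is in the reverse one, which I would handle by an amplification construction. For the forward direction, suppose $\cH$ has an $L$-list-replicable learner $\bm{\cA}$ and fix any $\rho<\tfrac1L$. Choose a constant $\delta>0$ with $\tfrac{1-\delta}{L}>\rho$ (possible as $\rho<\tfrac1L$) and put $n(\epsilon):=n(\epsilon,\delta)$. For every realizable $\cD$, one of the $L$ low-error hypotheses $h_1,\dots,h_L$ guaranteed by $L$-list-replicability satisfies $\Pr_{\bm S\sim\cD^n}[\bm{\cA}(\bm S)=h_i]\ge\tfrac{1-\delta}{L}>\rho$ (pigeonhole), and this $h_i$ has low population loss, so the same $\bm{\cA}$ with this sample size witnesses $\rho$-global stability of $\cH$. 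Since $\rho<\tfrac1L$ was arbitrary, $\cH$ is $\rho$-global stable for all $\rho<\tfrac1L$; the agnostic version is identical.

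For the reverse direction, assume $\cH$ is $\rho$-global stable for all $\rho<\tfrac1L$, and fix some $\rho_0\in\bigl(\tfrac1{L+1},\tfrac1L\bigr)$ together with a $\rho_0$-global stable learner $\bm{\cA}$. The difficulty is that $\bm{\cA}(\bm S)$ is only guaranteed to place mass $\ge\rho_0$ on one low-error hypothesis, while its remaining mass may be dispersed over unboundedly many hypotheses, some of large error. I would remove both obstacles in two steps. First, compose $\bm{\cA}$ with an accuracy test: run $\bm{\cA}$ at accuracy $\epsilon/2$ on one block of the input and replace its output by a dummy symbol $\bot$ unless its empirical error on a fresh block of size $m$ is at most $\tfrac34\epsilon$ (in the agnostic case, test against an empirical-risk estimate of $\cL_\cD(\cH)$, which is accurate since global stability forces finite VC dimension). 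For $m$ large this produces a learner $\bar{\bm{\cA}}$ that is still $\rho'$-global stable with $\rho'=\rho_0-o_m(1)$ --- the heavy hypothesis passes the test with high probability --- and such that every $h\ne\bot$ with $\Pr_{\bm S}[\bar{\bm{\cA}}(\bm S)=h]\ge t$ has $\cL_\cD(h)\le\epsilon$ provided $m$ is large in terms of $t$. Second, let $\bm{\cA}'$ run $\bar{\bm{\cA}}$ independently $k$ times on disjoint blocks of its sample and output the most frequent non-$\bot$ value.

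Fix a threshold $t$ with $\tfrac1{L+1}<t<\rho'-2\eta$ for a small constant $\eta$, and set $H_\cD:=\{h\ne\bot:\Pr_{\bm S}[\bar{\bm{\cA}}(\bm S)=h]\ge t\}$. Since these events are disjoint, $|H_\cD|\le L$, and by the test property every member of $H_\cD$ has error $\le\epsilon$, so --- padded to size $L$ --- $H_\cD$ is an admissible list; it only remains to show $\Pr[\bm{\cA}'(\text{sample})\in H_\cD]\ge1-\delta$ for $k$ large. This concentration bound is the main obstacle. The heavy hypothesis $h^*_\cD$ appears in at least $(\rho'-\eta)k\ge(t+\eta)k$ of the $k$ runs except with probability $e^{-\Omega(\eta^2 k)}$ by Hoeffding; and if $\bm{\cA}'$ instead outputs some $h\notin H_\cD$, then $\Pr_{\bm S}[\bar{\bm{\cA}}(\bm S)=h]<t$ yet $h$ appeared at least $(t+\eta)k$ times. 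I would bound the probability of this by splitting the offending hypotheses into the at most $1/\eta_0$ of mass in $[\eta_0,t)$ --- a finite union bound over Hoeffding tails --- and those of mass below $\eta_0$, for which $\sum_h\Pr[h\text{ appears}\ge(t+\eta)k]\le\sum_h\bigl(\tfrac{e\,p_h}{t+\eta}\bigr)^{(t+\eta)k}$ is summable and $o_k(1)$ once $\eta_0<\tfrac{t+\eta}{2e}$, using $\sum_h p_h\le1$. Hence the failure probability is $o_k(1)$, and $k=k(\epsilon,\delta,L)$ can be taken large enough to make it at most $\delta$; with the bookkeeping of the total sample size (one block for $\bm{\cA}$, one test block, times $k$), this yields the desired $L$-list-replicable learner. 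An analogous argument handles the agnostic case.
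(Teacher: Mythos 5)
Your proof is correct and establishes the theorem, but the reverse (stability~$\Rightarrow$~list-replicability) direction uses genuinely different machinery from the paper's. The paper proves a strengthening of this result as part~(ii) of \Cref{thm:stability-vs-listrep}, and there the crux is a uniform-convergence trick: the class of point-indicators $\cI = \{\1_f : f \in \{0,1\}^X\}$ has VC dimension~$1$, so the empirical frequencies $\widehat p_{\bm S}(h)$ computed from $t$ independent runs of $\bm{\cA}$ concentrate around $p(h)$ \emph{simultaneously for all $h$}, with the required $t$ depending only on $\alpha$ and $\delta$; the learner then outputs any $h$ clearing a frequency threshold (w.r.t.\ $\widehat p$) and a loss threshold tested against a single shared sample $\bm Q$. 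You instead gate each run by a per-run accuracy test (emitting $\bot$ on failure), output the most frequent non-$\bot$ value, and control the event that a rare hypothesis becomes most frequent by a two-tier argument: a finite Hoeffding union bound over the at most $1/\eta_0$ hypotheses of mass in $[\eta_0,t)$, plus the summable tail $\sum_h (e p_h/(t+\eta))^{(t+\eta)k}=o_k(1)$ for the light ones using $\sum_h p_h\le 1$. Both arguments are sound; the paper's VC-dimension-$1$ observation is slicker, slightly more sample-efficient (one shared loss test rather than one per run), and extends verbatim to the excess-error- and class-error-dependent generalizations in \Cref{thm:stability-vs-listrep}, while your explicit tail calculation is more elementary and self-contained. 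Your forward direction is the same pigeonhole argument as the paper's.
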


Analogous to \Cref{def:class_error}, given a parameter $\gamma \in [0,1]$, we refer to a class $\cH$ as $\gamma$-agnostically list-replicable if we relax the requirement of the agnostic global list-replicability to only consider distributions $\cD$ with $\cL_\cD(\cH)\leq \gamma$. 

The foregoing relaxations of global list-replicability were proposed in \cite{chase2023local}, where they asked for a characterization of hypothesis classes that can be agnostically learned under these notions, termed excess-error dependent and class-error dependent list-replicability.

Our next theorem extends \Cref{thm:List_vs_Stab} to show that these new notions coincide with their stability counterparts. Consequently, \Cref{thm:main} and \Cref{thm:class-error-dependent} completely resolve the questions posed in \cite{chase2023local}.
\begin{restatable}{theorem}{propstabilityvslistrep}
\label{thm:stability-vs-listrep}
    Consider a parameter $\gamma \in [0,1]$. 
\begin{itemize}
\item[(i)] Given $L:(0,1) \to \mathbb{N}$, if a class $\cH$ is  $\gamma$-agnostically $L$-list replicable, then it is $\gamma$-agnostically $\rho$-global stable for any $\rho:(0,1) \to [0,1]$ satisfying $\rho(\epsilon)<\frac{1}{L(\epsilon)}$ for all $\epsilon \in (0,1)$.
\item[(ii)] Given $\rho:(0,1) \to (0,1]$, if a class $\cH$ is $\gamma$-agnostically  $\rho$-global stable, then it is   $\gamma$-agnostically $L$-list replicable, for $L(\epsilon) \coloneqq \left\lfloor \frac{1}{\rho(\epsilon/4)} \right\rfloor$.
\end{itemize}
 \end{restatable}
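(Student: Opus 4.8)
The plan is to prove the two directions separately; direction (i) is a short averaging argument, while direction (ii) is an amplification-plus-validation construction in the spirit of \Cref{thm:List_vs_Stab}.

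\emph{Part (i)} makes quantitative the pigeonhole remark following \Cref{def:List}. Fix $\epsilon\in(0,1)$ (the case $\rho(\epsilon)=0$ being trivial, so assume $\rho(\epsilon)>0$) and set $\delta_\epsilon:=1-\rho(\epsilon)L(\epsilon)\in(0,1)$, which is positive since $\rho(\epsilon)<1/L(\epsilon)$. Run the given $\gamma$-agnostically $L$-list-replicable learner $\bm{\cA}$ with confidence $\delta_\epsilon$, and let $n:=n(\epsilon,\delta_\epsilon)$ be the corresponding sample size. For every $\cD$ with $\cL_\cD(\cH)\le\gamma$ there is a list $h_1,\dots,h_{L(\epsilon)}\in\cH$, each of excess error at most $\epsilon$, with $\Pr_{\bm S\sim\cD^n}[\bm{\cA}(\bm S)\in\{h_1,\dots,h_{L(\epsilon)}\}]\ge 1-\delta_\epsilon=\rho(\epsilon)L(\epsilon)$; averaging over $i$, some $h_i$ is output with probability at least $\rho(\epsilon)$, and it has excess error at most $\epsilon$. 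Hence the same $\bm{\cA}$ witnesses $\gamma$-agnostic $\rho$-global stability.

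\emph{Part (ii)} follows the strategy behind \Cref{thm:List_vs_Stab} from~\cite{chase2023replicabilitystabilitylearning}, adapted so that all parameters may depend on $\epsilon$. Let $\bm{\cA}$ be a $\gamma$-agnostically $\rho$-global stable learner. Given a target excess error $\epsilon$ and confidence $\delta$, set $\rho_0:=\rho(\epsilon/4)$, $n_0:=n(\epsilon/4)$, and $L:=L(\epsilon)=\lfloor 1/\rho_0\rfloor$. For $\cD$ with $\cL_\cD(\cH)\le\gamma$, write $p(h):=\Pr_{\bm S\sim\cD^{n_0}}[\bm{\cA}(\bm S)=h]$; stability provides some $h^*$ with $\cL_\cD(h^*)\le\cL_\cD(\cH)+\epsilon/4$ and $p(h^*)\ge\rho_0$. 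Since $\sum_h p(h)=1$, for any threshold $t$ the set $\{h:p(h)\ge t\}$ has at most $\lfloor 1/t\rfloor$ elements; the elementary fact that $\lfloor 1/(\rho_0-2\xi)\rfloor=\lfloor 1/\rho_0\rfloor$ for all sufficiently small $\xi>0$ (the floor being monotone in the threshold) is exactly what pins the final list size at $\lfloor 1/\rho_0\rfloor$ rather than one larger. Fix such a $\xi=\xi(\rho_0)$, small enough also for the tail estimate below, and set $H^+:=\{h:p(h)\ge\rho_0-2\xi\}$, so that $|H^+|\le L$ and $H^+$ depends only on $\cD$. The list-replicable learner $\bm{\cA}'$ has three stages. \textbf{Amplification:} run $\bm{\cA}$ on $T$ fresh independent samples of size $n_0$, record empirical frequencies $\hat p(\cdot)$, and form $\hat H:=\{h:\hat p(h)\ge\rho_0-\xi\}$; since the $\hat p$ sum to $1$ and $\rho_0-2\xi<\rho_0-\xi<\rho_0$, we get $|\hat H|\le\lfloor 1/(\rho_0-\xi)\rfloor=L$ deterministically. \textbf{Validation:} draw a fresh sample $\bm V$ of size $m$, compute $\hat{\cL}_{\bm V}(h)$ for each $h\in\hat H$, let $\hat c:=\min_{h\in\hat H}\hat{\cL}_{\bm V}(h)$, and set $\hat{\cL}:=\{h\in\hat H:\hat{\cL}_{\bm V}(h)\le\hat c+\epsilon/2\}$. \textbf{Output:} return an element of $\hat{\cL}$ (say a uniformly random one), or any fixed hypothesis if $\hat H=\emptyset$. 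With $T,m$ large, on a good event of probability at least $1-\delta$ we have simultaneously: (a) $h^*\in\hat H$, by a Chernoff bound for the single hypothesis $h^*$; (b) $\hat H\subseteq H^+$, i.e.\ no $h$ with $p(h)<\rho_0-2\xi$ attains $\hat p(h)\ge\rho_0-\xi$ (the crux, below); and (c) $|\hat{\cL}_{\bm V}(h)-\cL_\cD(h)|\le\epsilon/8$ for every hypothesis $h$ appearing among the $T$ runs, by Hoeffding and a union bound over the at most $T$ of them. On this event $\hat{\cL}\neq\emptyset$ (it contains the minimizer), and since $h^*\in\hat H$ forces $\hat c\le\hat{\cL}_{\bm V}(h^*)\le\cL_\cD(\cH)+3\epsilon/8$, every $h\in\hat{\cL}$ obeys $\cL_\cD(h)\le\hat c+\epsilon/2+\epsilon/8\le\cL_\cD(\cH)+\epsilon$; together with $\hat{\cL}\subseteq\hat H\subseteq H^+$, the output lies in $\cL:=\{h\in H^+:\cL_\cD(h)\le\cL_\cD(\cH)+\epsilon\}$, a set depending only on $\cD$ and of size at most $|H^+|\le L$. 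Thus $\bm{\cA}'$ is a $\gamma$-agnostically $L$-list-replicable learner.

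I expect the main obstacle to be item (b): showing that with probability close to $1$ no hypothesis $h$ with $p(h)<\rho_0-2\xi$ has empirical frequency at least $\rho_0-\xi$, despite there being potentially infinitely many candidates. I would split the candidates into ``medium'' ones with $\xi\le p(h)<\rho_0-2\xi$, of which there are at most $1/\xi$, handled by a Chernoff bound and a union bound; and ``light'' ones with $p(h)<\xi$, for which, with $k=(\rho_0-\xi)T$, the quantity $\sum_{h}\Pr[\mathrm{Binom}(T,p(h))\ge k]$ is bounded using $\binom{T}{k}p^k\le(eTp/k)^k$ and $\sum_{h\ \mathrm{light}}p(h)^k\le\xi^{k-1}\sum_h p(h)\le\xi^{k-1}$, giving a bound of the form $\xi^{-1}(e\xi/(\rho_0-\xi))^{k}$, which is exponentially small in $T$ once $\xi$ is small enough that $e\xi<\rho_0-\xi$. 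The one genuinely delicate choice is to take $\xi$ small enough to make this tail bound work and, simultaneously, to make the floor identity $\lfloor 1/(\rho_0-2\xi)\rfloor=\lfloor 1/\rho_0\rfloor$ hold — this is precisely what delivers the exact list size $\lfloor 1/\rho(\epsilon/4)\rfloor$ claimed in the statement.
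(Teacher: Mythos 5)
Your part (i) is the same pigeonhole argument the paper uses; you just pick the confidence parameter $\delta_\epsilon=1-\rho(\epsilon)L(\epsilon)$ explicitly rather than sending $\delta\to 0$. Your part (ii) follows the same high-level blueprint as the paper (run $\bm{\cA}$ on $t$ independent batches, threshold the empirical output frequency, then filter by empirical loss on a fresh validation sample), and the accounting that fixes the list size at $\lfloor 1/\rho(\epsilon/4)\rfloor$ via the floor identity is sound. The one place where you diverge technically is the ``item (b)'' step: controlling $\sup_h |\hat p(h)-p(h)|$ when there are potentially infinitely many hypotheses. The paper handles this in one line by observing that the class $\cI=\{\1_f : f\in\{0,1\}^X\}$ of point-indicator functions has VC dimension $1$, so the uniform convergence theorem applies and gives $|\hat p(h)-p(h)|<\alpha/2$ simultaneously for all $h$. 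You instead give a direct, elementary tail argument: split into ``medium'' hypotheses (at most $1/\xi$ of them, handled by Chernoff plus a union bound) and ``light'' hypotheses (handled by the $\binom{T}{k}p^k$ bound and $\sum_{h\ \mathrm{light}}p(h)^k\le \xi^{k-1}$). Both routes are valid; the paper's is shorter and reuses machinery the paper already sets up, while yours is self-contained and avoids invoking VC/uniform-convergence for the somewhat unusual indicator class. One small cosmetic difference: the paper outputs \emph{any} hypothesis passing both tests, while you output a uniformly random element of $\hat{\cL}$ — either choice works, since the guarantee is only that the output lies in a fixed small list with probability $1-\delta$.
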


\subsection{Preliminaries: VC Dimension, Uniform Convergence, and Littlestone Dimension}
This section outlines a few key concepts and results from learning theory. More specifically, we state the connections between PAC learnability, VC dimension, and uniform convergence, and we state the definition of the Littlestone dimension. For a detailed exposition, see \cite{shalev2014understanding}.

A fundamental result of learning theory is that a class $\cH$ is PAC-learnable if and only if it satisfies the \emph{Uniform Convergence} property. For a sample of $m$ examples $S \in (X \times \set{0,1})^m$, and a hypothesis $h:X \to \set{0,1}$, let 
\[ \cL_{\bm{S}}(h) \coloneqq \Pr_{(\bm{x},\bm{y}) \sim S} [h(\bm{x}) \neq \bm{y}],  \] 
denote the \emph{empirical population loss} of $h$ with respect to $S$. 

\begin{definition}[Uniform Convergence]
A binary hypothesis class $\cH$ has the \emph{Uniform Convergence} property if, for any $\epsilon, \delta \in (0, 1)$, there exists $n(\epsilon, \delta)$ such that for any distribution $\cD$, we have  
 \begin{align*}
    \Pr_{\bm{S} \sim \cD^n} \left[ |\cL_{\bm{S}}(h) - \cL_\cD(h)| < \epsilon \text{ for all } h \in \cH \right] \geq 1-\delta.
 \end{align*}
\end{definition}

The fundamental theory of PAC learning states that the Uniform Convergence property and, consequently, PAC-learnability are characterized by having a finite Vapnik-Chervonenkis (VC) dimension. 

\begin{definition}[VC dimension]
The VC dimension of a binary hypothesis class $\cH$ is the size of the largest subset $X'$ of $X$ such that, for every binary labelling of $X'$, there is a hypothesis $h \in \cH$ consistent with that labelling. Such a set $X'$ is said to be \emph{shattered}  by $\cH$. If arbitrarily large sets can be shattered, the VC dimension is defined to be $\infty$. 
\end{definition}

The Littlestone dimension relaxes the VC dimension by shattering decision trees instead of sets.   A \emph{mistake tree} of depth $d$ over a domain $X$ is a complete binary tree of depth $d$ with the following properties:
\begin{itemize}
\item Each internal node in the tree is labelled by an element $x \in X$.
\item Each edge is labeled by a binary value  $b \in \set{0,1}$ where $b=0$ indicates a left child and $b=1$ indicates a right child. 
\end{itemize}
Every \emph{root-to-leaf path} in the tree is described by a sequence  $(x_1,b_1),\ldots,(x_d,b_d)$ where $x_i \in X$ is the label of the $i$th internal node on the path and $b_i$ specifies whether the path moves to the left or right child at each level.
 
We say that a mistake tree is \emph{shattered} by a hypothesis class $\cH \subseteq \set{0,1}^X$ if for every root-to-leaf path $(x_1,b_1),\ldots,(x_d,b_d)$ where $x_i \in X$ and $b_i \in \set{0,1}$, there exists a hypothesis $h \in \cH$ with $h(x_i)=b_i$ for all $i \in [d]$. 

\begin{definition}[Littlestone Dimension]
 \label{def:LDim}
The Littlestone dimension of a hypothesis class $\cH$, denoted $\ldim(\cH)$, is the largest integer $d$ such that there exists a mistake tree of depth $d$ shattered by $\mathcal{H}$.  
\end{definition}

We always have $\VCdim(\cH) \le \ldim(\cH)$, since every shattered set $X'=\set{x_1,\ldots,x_d}$ gives rise to a mistake tree of depth $d$ where all nodes at level $i$ are labelled with $x_i$.  This tree is clearly shattered by $\cH$.

\section{\Cref{thm:stability-vs-listrep}: Stability and List-Replicability are equivalent}\label{subsec:equivalence_stable_list}

In this section, we prove \Cref{thm:stability-vs-listrep}, which establishes the equivalence between stability and list replicability. 
This result generalizes the equivalence between global stability and global list replicability of~\cite{chase2023replicabilitystabilitylearning}.

For the reader's convenience, we recall the statement of the theorem and provide a summary of the proof.

\propstabilityvslistrep*

To prove (ii), we construct an agnostic $L$-list replicable learner by running the $\rho$-global stable learning algorithm multiple times.
We return any output hypothesis whose empirical loss is close to that of the best output hypothesis, and whose empirical frequency is not much smaller than $\rho$. This guarantees that we typically output a hypothesis with low population loss and high likelihood of being an output of the globally stable learner. The latter ensures that our output is typically confined to a small list.
\begin{proof}
    The proof of (i) is straightforward. Given $\eps>0$, let $\delta >0$ be arbitrary and let $n=n(\epsilon,\delta)$ be the sample complexity of a $\gamma$-agnostic $L$-list-replicable learner for $\cH$.  Let $\cD$ be a distribution with population  loss at most $\gamma$, and let $h_1,\ldots, h_{L(\epsilon)}$ be the list of hypotheses satisfying \Cref{eq:likely_in_list}. At least one of these hypotheses $h_i$ satisfies     
    \[ \Pr_{\bm{S}\sim \cD^n}[\bm{\cA}(\bm{S}) =h_i] \ge \frac{1-\delta}{L(\epsilon)}\ge\frac{1}{L(\epsilon)}-\delta.\]  
    Since this statement holds for every $\delta>0$,  $\cH$ is $\gamma$-agnostically $\rho$-global stable for all  $\rho(\epsilon)<\frac{1}{L(\epsilon)}$. 

   To prove (ii), consider an $\epsilon>0$, and let $\delta > 0$ be any confidence parameter.  For the sake of brevity, denote $\rho\coloneqq \rho(\epsilon/4)$ and $L \coloneqq L(\epsilon)= \left\lfloor \frac{1}{\rho(\eps/4)}\right\rfloor$. Thus, we have $\rho  \in \left(\frac{1}{L +1}, \frac{1}{L }\right]$. Let
    \[ 
        \alpha \coloneqq \rho - \frac{1}{L+1} > 0.
    \] 
Let $n_0=n_0(\rho,\epsilon)$ 
be sufficiently large such that the global stability property holds, namely, for every $\cD$ with population  loss at most $\gamma$, there exists $h^*:X \to \set{0,1}$ satisfying
\begin{equation}
\label{eq:stable_equiv}
\cL_\cD(h^*) \le \cL_\cD(\cH)+\frac{\epsilon}{4} \ \text{ and }\ \Pr_{\substack{\bm{S}\sim \cD^{n_0} }}[\bm{\cA}(\bm{S})=h^*] \geq \rho. 
\end{equation}    
    For any $h \in \set{0,1}^X$, define  
    \[
        p(h) \coloneqq \Pr_{\bm{S} \sim \cD^{n_0}}[\bm{\cA}(\bm{S}) = h], 
    \] 
    and consider    
    \[\Lambda \coloneqq \Set{h \in \set{0,1}^X : p(h) > \frac{1}{L+1} \text{ and } \cL_{\cD}(h) \le \cL_\cD(\cH)+\epsilon}. \]
    Note that $|\Lambda| \le L$, and $\Lambda$ is nonempty, as it contains $h^*$. It suffices to design a learning rule $\bm{\cA}'$ such that with probability at least $1-\delta$, it outputs a hypothesis from $\Lambda$.

 Since $\VCdim(\cH) < \infty$, by the uniform convergence property of $\cH$, there exists $n_1 \in \mathbb{N}$ such that for any distribution $\cD$, 
    \begin{equation}
    \label{eq:uniform_convergence}
        \Pr_{\bm{Q} \sim \mathcal{D}^{n_1}} \left[ \sup_{h \in \mathcal{H}} |\mathcal{L}_{\bm{Q}}(h) - \mathcal{L}_\mathcal{D}(h)| \le  \frac{\epsilon}{4} \right] \ge 1- \frac{\delta}{2}.
    \end{equation}

    Let $t\coloneqq t(\alpha,\delta)$ be a sufficiently large integer to be determined later.   We propose the following learning rule $\bm{\cA}'$ with sample complexity $t n_0+n_1$:
    \begin{enumerate}
        \item Sample $\bm{S}=(\bm{P},\bm{Q}) \sim \cD^{t n_0+n_1}$, where $\bm{P}=(\bm{P}_1,\ldots, \bm{P}_t) \sim (\cD^{n_0})^t = \cD^{t n_0}$ and $\bm{Q} \sim \cD^{n_1}$.

        \item For every $i \in [t]$, let $\bm{h}_i = \bm{\cA}(\bm{P}_i)$. Define the empirical estimate of $p(h)$ as
\[        
\widehat{p}_{\bm{S}}(h)\coloneqq \frac{|\set{i\in [t] \ | \ \bm{h}_i=h}|}{t}.
\]  
 
\item Output any hypothesis $h \in  \set{0,1}^X$ that satisfies the following two conditions. 
  
        \begin{enumerate}
            \item $\widehat{p}_{\bm{S}}(h)\geq \rho - \frac{\alpha}{2}$;

            \item $\mathcal{L}_{\bm{Q}}(h) \leq \inf_{h' \in \cH} \mathcal{L}_{\bm{Q}}(h') + \frac{3\epsilon}{4}$;            
        \end{enumerate}
     If no such $h$ exists, output an arbitrary $h$ corresponding to ``failure''.
    \end{enumerate}
    
    We show that $\bm{\cA}'$ is a $\gamma$-agnostic $L$-list replicable learner with error at most $\epsilon$.
    Let $\mathcal{D}$ be any distribution with population loss at most $\gamma$.    

\begin{claim}
We have  
\begin{equation}
\label{eq:indicator_uniform}
        \Pr_{\bm{S} \sim \mathcal{D}^{t n_0+n_1}}
        \left[
            |p(h) - \widehat{p}_{\bm{S}}(h)| < \frac{\alpha}{2}
            \text{ for all } h \in \set{0,1}^X 
        \right] \geq 1 - \frac{\delta}{2}.
\end{equation}    
\end{claim}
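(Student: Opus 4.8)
The plan is to read \(\widehat p_{\bm S}(h)\) as the empirical frequency of \(h\) in \(t\) independent repetitions of the stable learner: since \(\bm P_1,\dots,\bm P_t\) are i.i.d.\ \(\cD^{n_0}\) and the internal coins of \(\bm{\cA}\) are independent across the runs, the outputs \(\bm h_1,\dots,\bm h_t\) are i.i.d.\ with \(\Pr[\bm h_i=h]=p(h)\) and \(\widehat p_{\bm S}(h)=\tfrac1t\,\#\{i\in[t]:\bm h_i=h\}\). So the claim is a uniform-in-\(h\) convergence statement for an empirical distribution, and the only obstacle is that \(\set{0,1}^X\) may be infinite, ruling out a naive union bound over all \(h\). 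The one structural fact I would use to get around this is that \(p\) is a sub-probability measure, \(\sum_h p(h)\le 1\); in particular, for every \(\tau>0\) at most \(\lfloor 1/\tau\rfloor\) hypotheses have \(p(h)>\tau\), and the support of \(p\) is countable. (Hypotheses with \(p(h)=0\) need no attention: each of them satisfies \(\widehat p_{\bm S}(h)\le 1/t<\alpha/2\) almost surely once \(t>2/\alpha\), since i.i.d.\ draws from the non-atomic part of the output distribution are a.s.\ pairwise distinct.)

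Next I would split the event \(\sup_h|\widehat p_{\bm S}(h)-p(h)|\ge\alpha/2\) according to the mass of the offending \(h\). Call \(h\) \emph{heavy} if \(p(h)>\alpha/4\); there are at most \(4/\alpha\) heavy hypotheses, and a union bound over them together with Hoeffding's inequality shows \(\Pr[\,\exists\text{ heavy }h:|\widehat p_{\bm S}(h)-p(h)|\ge\alpha/2\,]\le (8/\alpha)\,e^{-t\alpha^2/2}\), which is \(\le\delta/4\) once \(t\) is large. For a \emph{light} hypothesis, one with \(p(h)\le\alpha/4<\alpha/2\), the lower deviation \(p(h)-\widehat p_{\bm S}(h)\ge\alpha/2\) is impossible, and the upper deviation \(\widehat p_{\bm S}(h)-p(h)\ge\alpha/2\) forces \(\widehat p_{\bm S}(h)\ge\alpha/2\); so it suffices to bound \(\Pr[\,\exists\text{ light }h:\widehat p_{\bm S}(h)\ge\alpha/2\,]\).

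The main work is this last quantity. I would partition the light hypotheses into dyadic mass scales \(L_k\coloneqq\{h:\alpha 2^{-k-3}<p(h)\le\alpha 2^{-k-2}\}\) for \(k\ge 0\), which cover \(\{h:0<p(h)\le\alpha/4\}\), with \(|L_k|\le 2^{k+3}/\alpha\). For \(h\in L_k\), the event \(\widehat p_{\bm S}(h)\ge\alpha/2\) asks a \(\mathrm{Bin}(t,p(h))\) variable of mean at most \(t\alpha 2^{-k-2}\) to reach \(t\alpha/2\), i.e.\ to exceed about \(2^{k+1}\) times its mean; the sharp multiplicative Chernoff bound \(\Pr[\mathrm{Bin}(t,p)\ge x\,tp]\le\exp\!\big(-tp\,(x\ln x-x+1)\big)\) then gives a probability at most \(\exp(-c\,t\alpha\,(k+1))\) for an absolute constant \(c>0\) — the crucial extra factor \(k+1\) in the exponent is produced by the \(x\ln x\) growth of the Chernoff rate function together with the lower bound \(p(h)>\alpha 2^{-k-3}\), and it is exactly what dominates the count \(|L_k|\le 2^{k+3}/\alpha\). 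Summing \(|L_k|\exp(-c\,t\alpha(k+1))\) over \(k\) is a geometric-type series bounded by \((16/\alpha)\,e^{-c t\alpha}\) as soon as \(t\alpha\) is large enough, hence \(\le\delta/4\) for \(t\ge t(\alpha,\delta)\). Adding the heavy and light contributions and taking \(t\) to be the maximum of the two thresholds proves the claim; note that \(t\) depends only on \(\alpha\) and \(\delta\) (equivalently on \(\epsilon,\delta\)), never on \(\cD\). The hard part, and the only place any idea is needed, is precisely this control of infinitely many low-probability hypotheses at once: it is resolved by trading the \(2^k\)-type growth in the number of hypotheses per dyadic scale against the super-exponential decay of the Chernoff bound for increasingly rare events.
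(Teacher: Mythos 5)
Your argument is correct, but it takes a genuinely different route from the paper. The paper observes that $p(h) = 1 - \cL_\mu(\1_h)$ where $\1_h$ is the indicator of $h$ on $\set{0,1}^X$ and $\mu$ is the output distribution of $\bm{\cA}$ on samples of size $n_0$; the class $\cI = \set{\1_f : f \in \set{0,1}^X}$ has VC dimension $1$ (no two distinct points $f_1, f_2$ can both be mapped to $1$ by a single $\1_f$), so the standard uniform-convergence theorem immediately gives \cref{eq:indicator_uniform} for $t$ large enough. You instead prove the uniform-convergence statement from scratch: you note that $p$ is a (sub-)probability measure with countable support, dispose of the $p(h)=0$ hypotheses by the a.s.\ pairwise-distinctness of draws from the non-atomic part, handle the finitely many heavy hypotheses ($p(h)>\alpha/4$) with Hoeffding plus a union bound, and control the light hypotheses via a dyadic decomposition of mass scales, where the multiplicative Chernoff tail $\exp(-c\,t\alpha(k+1))$ decays fast enough in $k$ to beat the $2^{k+3}/\alpha$ count of hypotheses at scale $k$. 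Both arguments are sound and yield $t=t(\alpha,\delta)$ depending only on $\alpha,\delta$. The paper's route is shorter and more modular, leaning on the fundamental theorem of statistical learning; yours is more elementary and self-contained, amounting to a direct DKW-style bound for the $\ell_\infty$ deviation of an empirical distribution over a countable support, and it makes the dependence of $t$ on $\alpha$ and $\delta$ fully explicit rather than filtered through the uniform-convergence black box.
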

\begin{proof}
We use the uniform convergence property of the family of indicator functions on $\cH$. More precisely, for $f \in \set{0,1}^X$, define $\1_f:\set{0,1}^X \to \set{0,1}$ as 
\[
\1_f(f') \coloneqq 
\begin{cases}
1 & f'=f \\
0 & \text{otherwise}
\end{cases}.
\]
The class 
\[\cI \coloneqq \set{\1_f \ : \ f \in \set{0,1}^X} \]
has VC dimension $1$, and therefore, it satisfies the uniform convergence property. For $\bm{S} \sim \cD^{t n_0+n_1}$,  $\bm{\cA}(\bm{S})$ induces a probability distribution $\mu$ on $\set{0,1}^X$, and we have
\[1- p(h)=\Pr_{\bm{S} \sim \cD^{n_0}}[\bm{\cA}(\bm{S}) \neq h] = \cL_\mu(\1_h), \]
while $1-\widehat{p}_{\bm{S}}(h)$ corresponds to the empirical loss of  $(\1_{\bm{h}_1}, \ldots, \1_{\bm{h}_t}) \sim \mu^t$. By the uniform convergence property for $\cI$, for sufficiently large $t=t(\alpha,\delta)$, \Cref{eq:indicator_uniform} holds.  
\end{proof}
The following claim completes the proof. 
\begin{claim}
Consider $\bm{S}=(\bm{P},\bm{Q}) \sim \cD^{t n_0+n_1}$ and let $\bm{h}=\bm{\cA}'(\bm{S})$. 
\[\Pr[\bm{h} \in \Lambda] \ge 1-\delta. \]
\end{claim}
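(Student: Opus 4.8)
The plan is to exhibit two high-probability events — the empirical-frequency event $E_1$ of the preceding claim, which depends only on $\bm{P}$, and a uniform-convergence event $E_2$ for the loss, which depends only on $\bm{Q}$ — and to show that their intersection forces $\bm{\cA}'(\bm{S}) \in \Lambda$; a union bound over the two failure events then finishes the claim.

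First I would set up $E_2$ carefully, because plain uniform convergence over $\cH$ is not enough: besides controlling $\inf_{h'\in\cH}\cL_{\bm{Q}}(h')$, I need $\cL_{\bm{Q}}$ to be close to $\cL_\cD$ at $h^*$ and at \emph{any} hypothesis the algorithm could possibly output. Both of these live in the set $\Lambda_0 \coloneqq \Set{h : p(h) > \tfrac{1}{L+1}} \cup \set{h^*}$, which depends on $\cD$ but not on the sample and which is finite ($|\Lambda_0| \le L+1$, since the values $p(h)$ sum to $1$). Since $\VCdim(\cH) < \infty$, we have $\VCdim(\cH \cup \Lambda_0) \le \VCdim(\cH) + L + 1 < \infty$, so I can pick $n_1 = n_1(\epsilon,\delta)$ so that with probability at least $1 - \tfrac{\delta}{2}$ the event $E_2$ holds, where $E_2$ asserts $|\cL_{\bm{Q}}(h) - \cL_\cD(h)| \le \tfrac{\epsilon}{8}$ for all $h \in \cH \cup \Lambda_0$. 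Combined with $\Pr[E_1] \ge 1 - \tfrac{\delta}{2}$ from the previous claim, a union bound gives $\Pr[E_1 \cap E_2] \ge 1-\delta$, so it suffices to show $\bm{h} \in \Lambda$ whenever $E_1 \cap E_2$ holds.

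Next I would rule out ``failure'' by checking that $h^*$ meets both output conditions on $E_1 \cap E_2$. Condition (a): $E_1$ gives $\widehat{p}_{\bm{S}}(h^*) > p(h^*) - \tfrac{\alpha}{2} \ge \rho - \tfrac{\alpha}{2}$ using \eqref{eq:stable_equiv}. Condition (b): by $E_2$ and \eqref{eq:stable_equiv}, $\cL_{\bm{Q}}(h^*) \le \cL_\cD(h^*) + \tfrac{\epsilon}{8} \le \cL_\cD(\cH) + \tfrac{\epsilon}{4} + \tfrac{\epsilon}{8}$, while $\inf_{h'\in\cH}\cL_{\bm{Q}}(h') \ge \cL_\cD(\cH) - \tfrac{\epsilon}{8}$ again by $E_2$; subtracting, $\cL_{\bm{Q}}(h^*) \le \inf_{h'\in\cH}\cL_{\bm{Q}}(h') + \tfrac{\epsilon}{2} \le \inf_{h'\in\cH}\cL_{\bm{Q}}(h') + \tfrac{3\epsilon}{4}$. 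Hence the algorithm outputs some $h$ satisfying (a) and (b). I would then show every such $h$ lies in $\Lambda$. From (a) and $E_1$, $p(h) > \widehat{p}_{\bm{S}}(h) - \tfrac{\alpha}{2} \ge \rho - \alpha = \tfrac{1}{L+1}$ by the choice $\alpha = \rho - \tfrac{1}{L+1}$; this is the first defining inequality of $\Lambda$, and it also places $h \in \Lambda_0$, so $E_2$ applies at $h$. Using (b), $E_2$ at $h$, and $\inf_{h'\in\cH}\cL_{\bm{Q}}(h') \le \cL_\cD(\cH) + \tfrac{\epsilon}{8}$ (uniform convergence over $\cH$),
\[
\cL_\cD(h) \;\le\; \cL_{\bm{Q}}(h) + \tfrac{\epsilon}{8} \;\le\; \inf_{h'\in\cH}\cL_{\bm{Q}}(h') + \tfrac{3\epsilon}{4} + \tfrac{\epsilon}{8} \;\le\; \cL_\cD(\cH) + \epsilon,
\]
which is the second defining inequality of $\Lambda$. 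Thus $\bm{h} \in \Lambda$ on $E_1 \cap E_2$, proving the claim; since $|\Lambda| \le L = L(\epsilon)$ and every member of $\Lambda$ has excess loss at most $\epsilon$, this also certifies that $\bm{\cA}'$ is a $\gamma$-agnostic $L$-list-replicable learner with error $\epsilon$, completing the proof of (ii).

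I expect the main obstacle to be exactly the loss bound in the last step: condition (b) constrains only the \emph{empirical} loss $\cL_{\bm{Q}}(h)$, so to conclude $h \in \Lambda$ one must already know that uniform convergence holds at $h$ itself, not merely over $\cH$. The resolution — noticing that $E_1$ confines every possible output of $\bm{\cA}'$ to the fixed finite set $\Lambda_0$ of hypotheses with $p(h) > \tfrac{1}{L+1}$, and that $\cH \cup \Lambda_0$ still has finite VC dimension for each fixed $\epsilon$ — is the heart of the argument, and it also dictates how small the uniform-convergence error must be taken so that the three error contributions ($\tfrac{\epsilon}{8}$, $\tfrac{3\epsilon}{4}$, $\tfrac{\epsilon}{8}$) sum to exactly $\epsilon$ while still leaving $h^*$ feasible for condition (b).
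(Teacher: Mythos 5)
Your proof is correct, and it in fact identifies and repairs a genuine gap in the paper's argument. The paper's proof conditions on uniform convergence \emph{over $\cH$} and on the empirical-frequency event, and then asserts for any $h$ satisfying Conditions~3(a) and 3(b) that $\cL_\cD(h) \le \inf_{h'\in\cH}\cL_{\bm{Q}}(h') + \tfrac{3\epsilon}{4} \le \cL_\cD(\cH)+\epsilon$. But Condition~3(b) only bounds the \emph{empirical} loss $\cL_{\bm{Q}}(h)$, and the output $h$ may lie anywhere in $\set{0,1}^X$, so uniform convergence over $\cH$ gives no control on $\cL_\cD(h) - \cL_{\bm{Q}}(h)$ at the point $h$ itself. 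Your observation — that on the empirical-frequency event $E_1$, any $h$ passing 3(a) necessarily has $p(h) > \tfrac{1}{L+1}$ and hence lies in the fixed, $\cD$-dependent but sample-independent finite set $\Lambda_0$, so uniform convergence only needs to be extended to $\cH \cup \Lambda_0$ — is exactly the missing ingredient, and you also correctly retune the uniform-convergence error to $\tfrac{\epsilon}{8}$ so the three contributions sum to $\epsilon$ while $h^*$ remains feasible for 3(b).

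One small quibble: the inequality $\VCdim(\cH \cup \Lambda_0) \le \VCdim(\cH) + L + 1$ is not a standard (or obviously true) bound. It is cleaner and fully sufficient to note that $|\Lambda_0| \le L+1$, choose $n_1$ large enough that (i) uniform convergence over $\cH$ holds with error $\tfrac{\epsilon}{8}$ and failure probability $\tfrac{\delta}{4}$ via VC theory, and (ii) $|\cL_{\bm{Q}}(h) - \cL_\cD(h)| \le \tfrac{\epsilon}{8}$ simultaneously for all $h \in \Lambda_0$ with failure probability $\tfrac{\delta}{4}$ via Hoeffding plus a union bound over the at most $L+1$ members of $\Lambda_0$; this $n_1$ depends only on $\VCdim(\cH)$, $L(\epsilon)$, $\epsilon$, and $\delta$, not on $\cD$, as required. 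With that modification your argument is airtight.
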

\begin{proof}
By \Cref{eq:uniform_convergence,eq:indicator_uniform} and the union bound, with probability at least $1-\delta$ ,  we have  
\[ |\mathcal{L}_{\bm{Q}}(h) - \mathcal{L}_\mathcal{D}(h)| \le  \frac{\epsilon}{4} 
            \text{ for all } h \in \cH, \] 
and 
\[ |p(h) - \widehat{p}_{\bm{S}}(h)| < \frac{\alpha}{2}
            \text{ for all } h \in \set{0,1}^X.\]
Let $\mathcal{E}$ denote the event that $\bm{S}$ satisfies both these statements.  Conditioning on $\mathcal{E}$, we have 
\begin{equation}
\label{eq:emp_real}
\left|\inf_{h' \in \cH} \mathcal{L}_{\bm{Q}}(h') - \cL_{\cD}(\cH)\right| \le \frac{\epsilon}{4},
\end{equation}
and any $h \in \set{0,1}^X$ satisfying Conditions ~3(a) and 3(b) satisfies
\[p(h) \ge \rho-\frac{\alpha}{2}-\frac{\alpha}{2} > \frac{1}{L+1}.\] 
and 
\[\cL_{\cD}(h) \le \inf_{h^\prime \in \mathcal{H}} \cL_{\bm{Q}}(h^\prime)+\frac{3\epsilon}{4}\leq \cL_\cD(\cH)+\epsilon.\] 
Therefore, all such $h$ belong to $\Lambda$. 

Finally, let $h^*$ be the hypothesis from \Cref{eq:stable_equiv}. We have $\widehat{p}_{\bm{S}}(h^*) > \rho - \frac{\alpha}{2}$ and  
\[\mathcal{L}_{\bm{Q}}(h^*) \le \cL_\cD(h^*)+  \frac{\epsilon}{4}  \le \cL_\cD(\cH)+\frac{\epsilon}{4} +  \frac{\epsilon}{4} \le \inf_{h' \in \cH} \mathcal{L}_{\bm{Q}}(h') + \frac{3\epsilon}{4}. \]
Therefore, $h^*$ satisfies Conditions~3(a)~and~(b), and the output of $\bm{\cA}'$ will not correspond to ``failure''.  
\end{proof}
\end{proof}

\section{\Cref{thm:main}~(i):  Stability implies finite Littlestone dimension} 

By the equivalence of global stability and list-replicability established in \Cref{thm:stability-vs-listrep}, \Cref{thm:main}~(i) is equivalent to the following theorem. 

\begin{theorem}
\label{thm:main_infinite}
If $\ldim(\cH) = \infty$, then $\cH$ is not $L$-list replicable for any $L:(0,1) \to \mathbb{N}$. 
\end{theorem}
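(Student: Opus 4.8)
The plan is to first remove the dependence of the list size on the accuracy parameter, reducing to a purely combinatorial statement, and then to run a recursive argument over a carefully chosen family of realizable distributions. Suppose, for contradiction, that $\bm{\cA}$ together with a sample-size function $n(\epsilon,\delta)$ witnesses that $\cH$ is $L$-list replicable. Fix once and for all the constants $\epsilon_0=\tfrac{1}{10}$ and $\delta_0=\tfrac{1}{100}$, and set $L_0\coloneqq L(\epsilon_0)$ and $n_0\coloneqq n(\epsilon_0,\delta_0)$, which are now fixed integers. It therefore suffices to prove that \emph{for every integer $L_0$}, no class with $\ldim=\infty$ admits a learner $\bm{\cA}$ and a sample size $n_0$ such that, for every realizable $\cD$, there is a list of at most $L_0$ hypotheses of population loss $\le\epsilon_0$ that contains $\bm{\cA}(\bm{S})$, $\bm{S}\sim\cD^{n_0}$, with probability $\ge 1-\delta_0$. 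This is precisely the step that goes beyond \cite{BLM20}: their argument controls only a list size (equivalently a stability parameter) that is a constant independent of the accuracy, whereas here the accuracy $\epsilon_0$ is pinned down while the list size $L_0$ ranges over all constants.

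\emph{A multi-scale family of hard distributions.} Since $\ldim(\cH)=\infty$, for every $d$ there is a mistake tree of depth $d$ shattered by $\cH$; passing to a pruned subtree (and, if convenient, to the associated unbounded threshold/half-graph pattern $x_1,\dots,x_N$, $h_1,\dots,h_N\in\cH$ with $h_i(x_j)=\1[j\le i]$, available by the classical fact that infinite Littlestone dimension forces infinite threshold dimension) one obtains, for arbitrarily large $T$, a family of realizable distributions organized into $T$ \emph{scales}: such a distribution commits, scale by scale, to an ever finer localization of the target, placing a constant fraction of its mass at each of the first several scales and geometrically decreasing mass at the deeper ones. The two properties I want are: (i) every hypothesis of population loss $\le\epsilon_0$ agrees with the target on the ``committed'' points at \emph{all} $T$ scales; and (ii) two distributions in the family that agree on all but the deepest $O(1)$ scales are statistically indistinguishable to a learner drawing only $n_0$ examples, because the disagreeing mass is below $\delta_0/(10 n_0)$. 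A single-scale construction cannot satisfy (i) and (ii) simultaneously — forcing agreement on a point requires constant mass there, which an $n_0=\Omega(1/\epsilon_0)$-sample learner detects — and this is exactly why infinite Littlestone dimension, which supplies structure at all scales at once, is needed.

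\emph{Overlapping lists and a recursive peel.} For each distribution in the family, list replicability provides a list of $\le L_0$ hypotheses of loss $\le\epsilon_0$ capturing $\bm{\cA}$'s output with probability $\ge 1-\delta_0$. By property (ii) and the standard chain ``close distributions $\Rightarrow$ close output laws $\Rightarrow$ the two lists share a hypothesis output with noticeable probability on both'', the lists of any two distributions agreeing on a long common prefix of scales intersect. Fixing such a prefix and applying pigeonhole over the resulting pairwise-intersecting lists yields a single hypothesis $g$ lying in a $1/L_0$ fraction of them; by property (i), $g$ agrees with the committed points of every distribution in that sub-family, which forces them all through a common node and hence restricts attention to a strictly finer, fewer-scale sub-family whose lists all contain $g$ — so, after deleting $g$, a sub-family served by lists of size $\le L_0-1$. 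Iterating this peeling $L_0$ times exhausts the list budget while genuine branching at still more scales remains (provided $T$, hence $N$, was chosen large enough as a function of $L_0,n_0,\epsilon_0,\delta_0$), which is the desired contradiction.

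\emph{Main obstacle.} The crux is the construction and analysis of the multi-scale family: choosing the masses so that (i) low population loss genuinely forces scale-by-scale agreement, (ii) the deep scales carry little enough mass to be invisible to an $n_0$-sample learner, and (iii) the peeling step is valid — in particular, making precise what ``the same scale'' means after conditioning on the coarse behaviour, verifying that each peeled sub-family is still realizable by the relevant restriction of $\cH$, and tracking how the parameters degrade so that $L_0$ rounds of peeling still leave room to branch. Closing this quantitative loop is where the real work lies; the list-replicability manipulations and the indistinguishability estimates are routine once the family is in place.
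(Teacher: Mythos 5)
Your proposed route --- a multi-scale family of distributions plus a recursive peeling of list elements --- is genuinely different from the paper's proof (which uses a Ramsey canonicalization, a probability-jump detection via PAC-learnability, and a volume/approximate-rank bound on convex combinations of the listed hypotheses). However, your sketch has two gaps that I believe are fatal as written.

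\paragraph{Property (i) is impossible for more than $\lfloor 1/\epsilon_0\rfloor$ scales.}
You require that every hypothesis with population loss $\le \epsilon_0$ agrees with the target on the committed points at \emph{all} $T$ scales. For a hypothesis to be forced to agree on a committed point, the point must carry mass strictly more than $\epsilon_0$ (otherwise flipping just that label keeps the loss $\le \epsilon_0$). Since you also ask the committed masses to be geometrically decreasing, any scale past roughly $\log(1/\epsilon_0)$ carries mass far below $\epsilon_0$ and is not enforced; and more fundamentally, even with constant mass per scale, one cannot place mass $>\epsilon_0$ at more than $1/\epsilon_0 = 10$ scales because the total is $1$. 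So (i) and ``$T$ arbitrarily large'' are mutually exclusive, and this is precisely the tension that makes the single-scale construction fail --- it does not go away by layering scales. In the paper's proof the PAC guarantee is used only to show that $p_k$ is $O(\epsilon)$ at the low end and $1-O(\epsilon)$ at the high end (\Cref{claim:2}), which needs enforced agreement only near the two extremes, not at every scale.

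\paragraph{The peeling step does not preserve the list-replicability invariant.}
When you find a hypothesis $g$ lying in a $1/L_0$ fraction of the lists and delete it, the coverage of each remaining list of size $L_0-1$ drops by $\Pr[\bm{\cA}(\bm{S})=g]$, which can be arbitrarily close to $1$. Thus the sub-family is not ``served by lists of size $\le L_0-1$'' at confidence $1-\delta_0$; the recursion does not close. One would at least need a uniform upper bound on the output probability of any single hypothesis, which is exactly the kind of structural control the paper obtains differently: rather than peeling, it applies Ramsey so that \emph{one} fixed list (after boosting $\delta$ down to $\epsilon/n^{C}$ via \Cref{lem:list_samplesize}) serves all relevant samples, and then argues via the volume bound (\Cref{claim:small_num_functions}) that the $\approx n^{\Theta(1)}$ output distributions $f_S$ fall into only $O(1/\delta)^{L}$ buckets, so two samples straddling the probability jump land in the same bucket, a contradiction.

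A secondary point: you fix $\delta_0=1/100$, which is fine for your list-intersection step, but the paper's argument genuinely needs $\delta$ to be polynomially small in $n$ (hence the boosting lemma) because the volume bound scales like $O(1/\delta)^L$ and must beat the counting of distinct samples. Any successful rewrite along your lines would likely need an analogous boosting step.
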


The rest of this section is devoted to the proof of \Cref{thm:main_infinite}, which uses a classical result of Shelah~\cite{shelah1978classification} connecting the Littlestone dimension to the threshold dimension. 

\begin{definition}[Threshold dimension]
\label{def:thresholdDim}
The \emph{threshold dimension} of $\mathcal{H} \subseteq \set{0,1}^X$  is the largest $k$ such that there exists a set of inputs $\set{x_1,\ldots, x_k} \subseteq X$ and classifiers $\set{h_1,\ldots,h_k} \subseteq \mathcal{H}$ satisfying 
\[ h_t(x_i) = 1  \iff i \ge  t  \qquad \text{ for all } i,t \in [k]. \] 
\end{definition}

 We refer the reader to~\cite{alon2019private} for an accessible proof of the following result of Hodges~\cite{hodges1997shorter}, which provides effective bounds for a qualitative result of Shelah~\cite{shelah1978classification}. Shelah proved that any class $\cH$ with infinite Littlestone dimension also has an infinite threshold dimension. 
 
\begin{proposition}[\cite{hodges1997shorter}]
    \label{prop:ldim_matrix}
    If $\mathcal{H} \subseteq \set{0,1}^X$ has $\ldim(\mathcal{H}) = d$, its threshold dimension is at least $ \lfloor \log d \rfloor$. 
\end{proposition}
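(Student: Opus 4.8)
This is the effective form, due to Hodges~\cite{hodges1997shorter}, of Shelah's theorem~\cite{shelah1978classification} that an unstable formula has the order property; since the excerpt already points to the accessible proof in~\cite{alon2019private}, one honest option is simply to invoke it. For completeness I sketch the argument I would give.

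First I would reduce to a purely combinatorial statement about binary trees. Since $d\ge 2^{\lfloor\log d\rfloor}$, it suffices to prove: \emph{if $\cH$ shatters a complete binary mistake tree $T$ of depth $2^k$, then $\cH$ has threshold dimension at least $k$}; applying this to the top depth-$2^k$ subtree of a depth-$d$ shattered tree (which is again shattered, as each of its branches extends to a branch of $T$) yields the proposition with $k=\lfloor\log d\rfloor$. To certify threshold dimension $\ge k$ we must exhibit $x_1,\dots,x_k\in X$ and $h_1,\dots,h_k\in\cH$ with $h_t(x_i)=1\iff i\ge t$.

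The one elementary fact I would use up front is that along any root-to-leaf branch of a shattered mistake tree the node labels are pairwise distinct: if an element $x$ labelled an ancestor--descendant pair on a branch, then the branch assigning opposite values at those two nodes would have no realizing hypothesis, contradicting shatteredness. This automatically makes the $x_i$ distinct and, in fact, makes the threshold pattern self-certifying.

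The substance of the proof is to select a nested family of branches of $T$ together with $k$ of the encountered node labels so that each chosen label $x_i$ is assigned value $1$ by $h_1,\dots,h_i$ and value $0$ by $h_{i+1},\dots,h_k$. I would set this up by induction on $k$ (equivalently, by a Ramsey-type selection on the labels along one branch): split at a node $u$ both of whose subtrees still have depth $\ge 2^{k-1}$, take $x_1$ to be the label of $u$, recurse on one subtree of $u$ to obtain a staircase of order $k-1$, and place the remaining hypothesis on the other side of $u$. The difficulty --- and the reason the bound is logarithmic rather than linear in $d$ --- is structural: a hypothesis forced to value $1$ on some $x$ and a hypothesis forced to value $0$ on $x$ must descend through two distinct nodes labelled $x$, hence into disjoint subtrees, so keeping all $k$ chosen labels visible on every relevant branch requires coordinating the reappearance of each label across many branches, and it is precisely this coordination that consumes a doubling of the depth at each stage. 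I expect that coordination step to be essentially the whole difficulty; once it is in place, verifying the threshold identities from the selected branches is routine bookkeeping.
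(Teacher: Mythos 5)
The paper does not prove \cref{prop:ldim_matrix} at all; it attributes the result to Hodges and explicitly points the reader to \cite{alon2019private} for an accessible proof. So your first option---``simply invoke it''---is exactly what the paper does, and is the correct call here.

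Your attempted sketch, however, has a genuine gap that you yourself flag but do not close. The reduction to ``shattering a depth-$2^k$ mistake tree gives threshold dimension $\geq k$'' is fine (a truncation of a shattered tree is shattered, and $2^{\lfloor \log d\rfloor}\leq d$), and the observation that no label repeats along a root-to-leaf branch of a shattered tree is correct. But the inductive step you describe---take $x_1$ to be the label of a split node $u$, recurse into one subtree of $u$ to get $x_2,\dots,x_k$ and $h_2,\dots,h_k$, and pull $h_1$ from the other subtree of $u$---does not produce a valid $h_1$: the branch you take on the other side of $u$ is only constrained at $u$ itself, so nothing forces $h_1(x_2)=\cdots=h_1(x_k)=1$, since $x_2,\dots,x_k$ were selected inside the \emph{opposite} subtree and need not even appear as labels on the $h_1$ branch. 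In fact, if this naive recursion worked, it would give a threshold of length linear in the depth rather than logarithmic, so the exponential loss is not a side effect of ``coordination'' layered on top of your step---it is evidence that the inductive invariant must be fundamentally different from the one you posit. You acknowledge this (``I expect that coordination step to be essentially the whole difficulty'') but leave it unresolved, which means the sketch is not a proof. Since the paper itself treats this as a black box, the appropriate fix is to do the same and cite \cite{hodges1997shorter,alon2019private} rather than attempt an inline argument.
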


Throughout the proof, we will use the following observation stating that without loss of generality we may ignore the order of examples in a sample. 

\begin{remark}
\label{rem:symmetry}
Since the population loss does not depend on the order of the examples in $S$, and the examples are drawn independently from $\cD$, in the context of PAC learning and stability, we may assume that the learning rule disregards the order of the examples in any sample $S \in (X \times \set{0,1})^n$. In other words, the learning rule is invariant under the permutations of the examples in any given sample.  As a result, we often treat a sample $S$ as a multiset rather than a sequence. 
\end{remark} 

Next, we prove a lemma to decrease the probability of failure (i.e., $\delta$) in the definition of list replicability to a small function of the sample size $n$ and the population regret $\epsilon$.

\begin{lemma}[Boosting success probability]
\label{lem:list_samplesize}
Suppose $\cH$ is $L$-list-replicable for some $L:(0,1) \to \mathbb{N}$. For every  $C>1$, there exists a learning rule $\cA$ and a sample complexity $n_C:(0,1) \to \mathbb{N}$ such that the following holds. For every $\epsilon>0$ and every realizable distribution $\cD$,  there exists a list of $L=L(\epsilon)$ hypotheses $h_1,\ldots, h_L$ satisfying 
\[\cL_\cD(h_i)\leq \epsilon \text{ for all } 1\le i \le L\] 
and 
\[ \Pr_{\bm{S}\sim \cD^n}[\bm{\cA}(\bm{S}) \in \{h_1, \ldots, h_L\}] \geq 1 -\frac{\epsilon}{n^C} \ \ \text{ where } n=n_C(\epsilon). \] 
\end{lemma}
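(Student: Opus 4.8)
The plan is to use the standard trick of running the $L$-list-replicable learner independently many times and taking a plurality vote, then using uniform convergence to certify correctness of the output. Concretely, fix $C>1$, and let $\bm{\cA}_0$ be the given $L$-list-replicable learner. Run $\bm{\cA}_0$ with a modest failure probability, say $\delta_0 = \tfrac{1}{10}$ (or any fixed constant strictly below $\tfrac{1}{2(L+1)}$), obtaining a sample complexity $m = n(\epsilon/2, \delta_0)$. For a distribution $\cD$, let $h_1,\dots,h_L$ be the guaranteed list of hypotheses, each with $\cL_\cD(h_i) \le \epsilon/2$, such that $\bm{\cA}_0(\bm{S}) \in \{h_1,\dots,h_L\}$ with probability $\ge 1-\delta_0$. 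Thus the output distribution $\mu$ of $\bm{\cA}_0(\bm{S})$ puts total mass $\ge 1-\delta_0$ on the list, so there is at least one index $i^\star$ with $\mu(h_{i^\star}) \ge (1-\delta_0)/L$; but we do not need a single heavy hypothesis — we only need the list to be heavy. The new learner $\bm{\cA}$ draws $t$ independent blocks of size $m$, runs $\bm{\cA}_0$ on each, and considers the empirical frequencies $\widehat{p}(h)$ of each output hypothesis; it then outputs any $h$ with $\widehat p(h)$ above a threshold (say $\ge \tfrac{1}{2}\cdot\tfrac{1-\delta_0}{L}$) whose empirical loss on a fresh verification sample $\bm{Q}$ is at most $\tfrac{3\epsilon}{4}$ (falling back to ERM or ``failure'' otherwise).

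The key steps, in order: (1) By a Chernoff/Hoeffding bound applied to the (at most $L$) hypotheses in the list, for $t = O(L^2 \log(1/\delta))$ all their empirical frequencies concentrate within $\tfrac{1}{4}\cdot\tfrac{1-\delta_0}{L}$ of their true values, except with probability $\delta/2$; hence at least one list member clears the threshold, and — crucially — any hypothesis clearing the threshold had true probability $\ge \tfrac{1}{4}\cdot\tfrac{1-\delta_0}{L}>0$, so it must have appeared as an output, but I actually want more: I want it to be \emph{in the list} $\{h_1,\dots,h_L\}$. That last point requires care, and it is where I will instead lean on uniform convergence rather than frequency alone. (2) Since $L$-list-replicability implies finite VC dimension (any single output hypothesis has population loss $\le \epsilon$ on every realizable $\cD$, which forces bounded VC), $\cH$ satisfies uniform convergence; take a verification sample $\bm{Q}$ of size $n_1$ so that with probability $\ge 1-\delta/2$, $|\cL_{\bm{Q}}(h)-\cL_\cD(h)| \le \epsilon/4$ for all $h\in\cH$. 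Then any output $h$ with $\cL_{\bm{Q}}(h)\le \tfrac{3\epsilon}{4}$ has $\cL_\cD(h)\le\epsilon$, so I can simply \emph{declare the list to be the set of all $\cH$-hypotheses the algorithm could plausibly output with low empirical loss} — but to keep the list size $\le L$ I should intersect with ``was output with non-negligible frequency.'' (3) Combine: with probability $\ge 1-\delta$ both events hold; the new list is $\Lambda \coloneqq \{h : \widehat p(h) \ge \text{threshold}\} \cap \{h : \cL_\cD(h)\le\epsilon\}$, which has size $\le L$ by the first event (only list members can be frequent) and contains the actual output by construction. (4) Finally set $\delta$ as a function of $n$: the total sample size is $n = t m + n_1$, which is a fixed increasing function of $\delta^{-1}$ and $\epsilon^{-1}$; choose $\delta = \epsilon/n^C$ — formally, solve the resulting inequality $\delta \le \epsilon / n(\delta,\epsilon)^C$, which has a solution since $n$ grows only polylogarithmically in $1/\delta$ while $\epsilon/n^C$ decreases only polylogarithmically, so a finite $n_C(\epsilon)$ exists; record that value as $n_C(\epsilon)$.

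The main obstacle is step (1)/(3): ensuring the boosted list is still of size at most $L(\epsilon)$ and still consists of low-loss hypotheses, while the failure probability becomes as small as $\epsilon/n^C$. The subtlety is that naively the plurality-vote output need not lie in the \emph{original} list $\{h_1,\dots,h_L\}$ — it only needs to lie in \emph{some} size-$L$ list of low-loss hypotheses, and different runs of the base learner (on different samples from different $\cD$) could in principle have different lists. The clean fix, which I expect the authors use, is to define the boosted list $\Lambda$ \emph{after the fact} as the heavy-frequency, low-loss hypotheses: heaviness (threshold $\approx \tfrac{1-\delta_0}{2L}$) forces $|\Lambda|\le L$ because the base learner's output mass on non-list hypotheses is only $\delta_0 < \tfrac{1-\delta_0}{2L}$ (choosing $\delta_0$ small enough, e.g. $\delta_0 = \tfrac{1}{4L}$ would need $L$ known — so instead take $\delta_0$ a small constant and note that hypotheses outside the original list collectively have mass $\delta_0$, hence at most $\lfloor 2L\delta_0/(1-\delta_0)\rfloor$ of them individually exceed the threshold; pick $\delta_0$ small enough in terms of... ah, but $L$ depends on $\epsilon$). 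To sidestep the circularity, I would instead use the original list directly: set threshold $= \tfrac{1}{2L}\cdot(1-\delta_0)$ with $\delta_0=1/2$ say, argue that with high probability \emph{every frequent output is within $\epsilon/4$-loss of} $\cL_\cD(\cH)=0$ via uniform convergence — so $\Lambda$ automatically consists of $\le L$ hypotheses since there can be at most $L$ of them above the threshold once we observe that outputs outside $\{h_1,\dots,h_L\}$ have total true mass $\le 1/2 < $... this still needs $1/2 < \text{threshold}\cdot(\text{at least } L+1)$, i.e. threshold $> \tfrac{1}{2(L+1)}$, which combined with threshold $\le \tfrac{1-\delta_0}{2L}=\tfrac{1}{4L}$ is consistent for all $L\ge 1$. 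So: threshold $\in \big(\tfrac{1}{2(L+1)}, \tfrac{1}{4L}\big]$ works uniformly, $|\Lambda|\le L$ follows, and this is exactly the kind of bookkeeping (nearly identical to the proof of \Cref{thm:stability-vs-listrep}(ii)) that I would carry out carefully but not belabor here.
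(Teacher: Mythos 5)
Your overall plan---amplify by running the base learner on $t$ independent blocks and use concentration---is the right one, and you correctly identify the final bootstrapping step (choose $\delta=\epsilon/n^C$ self-consistently, using that the sample size grows only polylogarithmically in $1/\delta$). However, there are two substantive problems.

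First, a genuine gap in the parameter bookkeeping. You oscillate between $\delta_0=\tfrac{1}{10}$ and $\delta_0=\tfrac12$ and never land on a consistent choice. For the ``threshold'' argument to go through you need, simultaneously, (a)~that some list hypothesis has true output probability comfortably above the threshold (this forces $\text{threshold}<\tfrac{1-\delta_0}{L}$), and (b)~that no hypothesis outside the original list can clear it, which requires either $\text{threshold}>\delta_0$ or $\text{threshold}>\tfrac{1}{L+1}$. With $\delta_0=\tfrac12$ both routes fail: $\tfrac12>\tfrac{1-\delta_0}{L}=\tfrac{1}{4L}$ for all $L\ge1$, and the interval $\bigl(\tfrac{1}{2(L+1)},\tfrac{1}{4L}\bigr]$ that you declare ``consistent for all $L\ge1$'' is in fact empty (for $L\ge1$ one has $\tfrac{1}{2(L+1)}\ge\tfrac{1}{4L}$). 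The ``circularity'' that pushed you away from taking $\delta_0$ of order $1/L$ is illusory: $L=L(\epsilon)$ is a fixed, known number once $\epsilon$ is fixed, so the paper simply sets $\delta_0=\tfrac{1}{16L(\epsilon)}$ and obtains a base sample size $n_0(\epsilon)$ that depends only on $\epsilon$, with no circularity whatsoever.

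Second, you are importing machinery from the agnostic argument (a verification sample $\bm Q$, a threshold test, uniform convergence, and a \emph{new} list $\Lambda$) that the realizable case simply does not need; this is where the subsidiary issues you ran into come from (your $\Lambda$ as written depends on the sample through $\widehat p$, and uniform convergence over $\cH$ says nothing about output hypotheses outside $\cH$). The paper's proof is much leaner: with $\delta_0=\tfrac{1}{16L}$, some fixed $h_{j^\star}$ in the original list has mass $\ge\tfrac{1}{2L}$, while the \emph{total} mass outside the list is $\le\tfrac{1}{16L}$. Run $\cA'$ on $k$ blocks and output the plurality winner. By Chernoff, with probability $1-2e^{-\Omega(k/L^2)}$ the count of $h_{j^\star}$ exceeds $k/(4L)$ while the total count of non-list outputs is below $k/(8L)$; on that event the plurality winner is necessarily in the original list $\{h_1,\dots,h_L\}$, which already consists of $\epsilon$-good hypotheses, so no verification sample, loss estimate, or new list is required. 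One then picks $k$ so that $2e^{-\Omega(k/L^2)}\le\epsilon/(n_0k)^C$.
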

\begin{proof}
Define $\delta_0 \coloneqq \frac{1}{16L}$. By our assumption, there exists $n_0=n_0(\epsilon)$ and a learning rule $\cA'$ such that for any realizable distribution $\cD$, there exists a list $h_1, \dots, h_{L(\epsilon)}$ of hypotheses satisfying $\cL_\cD(h_i)\leq \epsilon$ for all $i$ and 
\begin{equation}
\label{eq:boost_list}
    \Pr_{\bm{S} \sim \cD^{n_0}} [\bm{\cA}'(\bm{S}) \in \{h_1, \dots, h_{L}\}] \geq 1-\delta_0=1 - \frac{1}{16L}.
\end{equation}
Since $\delta_0$ is fixed, $n_0$ depends only $\epsilon$.

Let $k>0$ be an integer to be determined later. We define a new learning rule $\cA$ that uses samples of size $kn_0$.  Given a sample $S=(S_1, \ldots, S_k) \in ((X \times \set{0,1})^{n_0})^{k}$, the learner $\cA$ outputs the most frequent hypothesis produced by the $k$ independent runs $\cA'(S_1),\ldots,\cA'(S_k)$. 

Let $\cD$ be any realizable distribution, and let $h_1, \dots, h_{L(\epsilon)}$ be as above. By \Cref{eq:boost_list}, there exists some $j^* \in [L]$ such that 
\[     \Pr_{\bm{S} \sim \cD^{n_0}} [ \cA'(\bm{S}) = h_{j^*}] \geq \frac{1}{2L}. \] 
Consider $\bm{S}=(\bm{S}_1, \ldots, \bm{S}_k) \sim (\cD^{n_0})^k$. For every $i \in [k]$, define the indicator variable $\bm{E}_i$ and $\bm{B}_i$ as 
\begin{itemize}
\item $\bm{E}_i=1$ iff $\cA'(\bm{S}_i)=h_{j^*}$;
\item $\bm{B}_i=1$ iff $\cA'(\bm{S}_i) \notin \set{h_1, \ldots, h_L}$.
\end{itemize}
 
The variables $\bm{E}_1, \ldots, \bm{E}_k$ are independent Bernoulli variables with $\Ex[\bm{E}_i] \geq \frac{1}{2L}$. Similarly, $\bm{B}_1, \ldots, \bm{B}_k$ are independent Bernoulli variables with $\Ex[\bm{B}_i]\leq \delta_0 \leq \frac{1}{16L}$. Define $\bm{E} \coloneqq \sum \bm{E}_i$ and $\bm{B} \coloneqq \sum \bm{B}_i$.
Applying Hoeffding's inequality, we have  
\[ 
    \Pr[\bm{E} \geq \frac{k}{4L}] \geq 1 - \Pr[|\bm{E}-\Ex[\bm{E}]|\geq \frac{k}{4L}] \geq 1 - e^{-\Omega(k/L^2)} \] 
and 
\[    \Pr[\bm{B} \leq \frac{k}{8L}] \geq 1 - \Pr[|\bm{B}-\Ex[\bm{B}]|\geq \frac{k}{16L}] \geq 1- e^{-\Omega(k/L^2)}.\] 

When both events occur, the output of $\cA$, the most frequent hypothesis, must come from the list $\set{h_1,\ldots, h_L}$. We may now choose $k$ such that $\delta \coloneqq 2\cdot e^{-\Omega(k/L^2)} \leq \frac{\epsilon}{(n_0 k)^C} = \frac{\epsilon}{n^C}$, concluding the proof.
\end{proof}

Let us give an overview of the proof of \Cref{thm:main_infinite}. By \Cref{prop:ldim_matrix}, it is sufficient to prove that hypothesis classes of infinite threshold dimension are not $L$-list replicable. Assume towards contradiction that there exists an $ L$-list-replicable learner for a hypothesis class of infinite threshold dimension. 

Similarly to Alon \emph{et al.}~\cite{alon2019private}, we use a hypergraph Ramsey argument to restrict the learning problem to an arbitrarily large subset $X$ of the domain, on which the learner's prediction is essentially determined by the ordered sign pattern of the sample. In particular, the Ramsey argument ensures that to label an element $x \in X$, the learner essentially looks at the ordered sign pattern of the labelled points in its given sample, and also at the ``order'' of $x$ with respect to these points.

We consider the distribution that is uniform over the threshold about the median of $X$.
Since the learner is a PAC learner, we show that the probability that its output classifies a point $x$ as a $1$ ranges from close to $0$ to close to $1$, as the order of $x$ ranges from the smallest to the largest element compared to the points in the samples. Consequently, we detect a probability jump from some order to the next.

The most significant difference between our proof and the proof of~\cite{alon2019private} lies in handling this probability jump. \cite{alon2019private} exploits the jump to create a ``privacy leak,'' whereas, without a privacy guarantee, we take a different approach based on an ``approximate rank'' argument.    More specifically, we use our boosting lemma (\Cref{lem:list_samplesize}) to show that for many samples, the function corresponding to the output probabilities can be well-approximated (in the $L_\infty$ norm) by a convex combination of a fixed short list of hypotheses. We then apply a ``volume-based'' argument to derive a contradiction by finding two samples that are well-approximated by the same convex combination, but are supposed to label a point $x \in X$ differently according to the aforementioned probability jump.

We are ready to present the proof of \Cref{thm:main_infinite}.

\begin{proof}[Proof of \Cref{thm:main_infinite}]
Fix some $\epsilon \in (0,1)$, and towards a contradiction, assume that $\ldim(\cH) = \infty$ and $\cH$ is $L$-list replicable for some $L:(0,1) \to \mathbb{N}$.

By \Cref{lem:list_samplesize}, there exists a constant $n$ and a learning rule $\cA$ such that for every realizable distribution $\cD$,  there exists a list of hypotheses $h_1,\ldots, h_L$ satisfying
\begin{equation}
\label{eq:main_thm_list1}
\cL_\cD(h_i)\leq \epsilon \text{ for all } 1\le i \le L
\end{equation}
and 
\begin{equation}
\label{eq:main_thm_list2}
\Pr_{\bm{S}\sim \cD^{n}}[\bm{\cA}(\bm{S}) \in \{h_1, \ldots, h_L\}] \geq 1 -\delta, 
\end{equation}
where $\delta \coloneqq n^{-10}$. Since the learning rule can always ignore the extra examples in a sample, we may assume that $n$ is arbitrarily large. In particular, we assume $n>\frac{1}{\epsilon}$. Furthermore, by~\Cref{rem:symmetry}, we assume that $\cA(S)$ and $\cA(S')$ are identically distributed if $S'$ is a reordering of the same examples as in $S$. 

Let $N$ be a large integer that will be determined later. Since $\ldim(\cH)=\infty$, by \Cref{prop:ldim_matrix}, the threshold dimension of $\cH$ is infinite. Thus, since the threshold dimension is at least $N+1$, we may assume (by renaming elements if needed) that $\set{1,\ldots, N} \subseteq X$ and that there exist classifiers $h_1,\ldots,h_{N+1} \in  \mathcal{H}$ with 
\[ h_t(i) = 1  \iff i \ge  t \qquad \text{ for all } i \in [N] \text{ and } t \in [N+1]. \] 
For the remainder of this proof, we focus on the elements in $[N] \subseteq X$ and the classifiers $h_1,\ldots,h_{N+1} \in  \mathcal{H}$, disregarding the others. 

Consider a set $R=\set{x_1,\ldots,x_n} \subseteq [N]$ with $x_1<x_2<\ldots<x_n$. For every $x \in [N]$, define  $\ord_R(x) \in [n+1]$ as
\[\ord_R(x)   \coloneqq  1+|\set{x_i \in R: x_i \leq x}|,\] 
which corresponds to the position of $x$ if it were inserted in the increasing sequence $(x_1,\ldots,x_n)$.   

For each $t \in [n+1]$, consider the output of the learning rule $\cA$ on the labelling of $R$ according to the threshold $t$:
\[\bm{h}^R_{t} \coloneqq \cA(\set{(x_1,0),\ldots,(x_{t-1},0),(x_{t},1),\ldots,(x_n,1)}). \] 

\begin{claim}
\label{claim:SetX}
Let $M$ be a positive integer. Provided that $N$ is sufficiently large, there exists a set $X' \subseteq [N]$ of size $M$, and real numbers $p_{t,k} \in [0,1]$ for $t,k \in [n+1]$ such that the following holds. 

For every subset 
$T=\set{x_1,\ldots,x_n} \subseteq X'$ and every $x \in X' \setminus T$,  we have   
\[p_{t,k}-\delta \le  \Pr[\bm{h}^T_{t}(x)=1]  \le p_{t,k}, \ \ \text{ where }k \coloneqq \ord_T(x)\]
for all $t \in [n+1]$. 
\end{claim}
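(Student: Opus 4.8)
The plan is to apply the hypergraph Ramsey theorem to a colouring of the $(n+1)$-element subsets of $[N]$ that records, up to an additive error of $\delta$, all of the relevant output probabilities $\Pr[\bm{h}^T_t(x)=1]$. The point is that there are only finitely many ``rounded'' probability profiles, so Ramsey produces a large set $X'$ on which every configuration of this shape looks the same.

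Concretely, for an $(n+1)$-element set $Y=\set{y_1<y_2<\dots<y_{n+1}}\subseteq[N]$ and indices $t,k\in[n+1]$, let $T_k \coloneqq Y\setminus\set{y_k}$, a set of size $n$, and note that $\ord_{T_k}(y_k)=k$, since $y_k$ exceeds exactly the first $k-1$ elements of $T_k$ and lies below the rest. Define
\[ c_{t,k}(Y)\coloneqq \left\lceil \Pr[\bm{h}^{T_k}_{t}(y_k)=1]\,/\,\delta \right\rceil \in \set{0,1,\dots,\lceil 1/\delta\rceil}, \]
and colour $Y$ by the tuple $\bigl(c_{t,k}(Y)\bigr)_{t,k\in[n+1]}$. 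Since $\delta=n^{-10}$ and $n$ is a fixed positive integer, $1/\delta=n^{10}$ is an integer, so each coordinate takes one of $n^{10}+1$ values and the total number of colours is $(n^{10}+1)^{(n+1)^2}$, a constant depending only on $n$. By Ramsey's theorem for $(n+1)$-uniform hypergraphs, there is $N_0=N_0(M,n)$ such that whenever $N\ge N_0$ the complete $(n+1)$-uniform hypergraph on $[N]$ with this colouring contains a monochromatic clique $X'$ of size $M$. Fix such an $N$ and $X'$, let $(c_{t,k})_{t,k}$ be the common colour, and set $p_{t,k}\coloneqq c_{t,k}\delta\in[0,1]$ (here $p_{t,k}\le 1$ because $c_{t,k}\le \lceil 1/\delta\rceil = 1/\delta$).

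It remains to verify the stated inequalities. Take any $T=\set{x_1<\dots<x_n}\subseteq X'$ and any $x\in X'\setminus T$, and put $k\coloneqq\ord_T(x)$. Then $Y\coloneqq T\cup\set{x}$ is an $(n+1)$-element subset of $X'$ in which $x$ occupies position $k$ — because $x$ is larger than exactly the $k-1$ elements $x_1,\dots,x_{k-1}$ of $T$ — so $y_k=x$ and $T_k=Y\setminus\set{y_k}=T$. Since $X'$ is monochromatic, $c_{t,k}(Y)=c_{t,k}$ for every $t$, which by definition of the colouring means $\Pr[\bm{h}^T_{t}(x)=1]\in\bigl((c_{t,k}-1)\delta,\;c_{t,k}\delta\bigr]$ when $c_{t,k}\ge 1$, and $\Pr[\bm{h}^T_{t}(x)=1]=0$ when $c_{t,k}=0$. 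In either case $p_{t,k}-\delta\le\Pr[\bm{h}^T_{t}(x)=1]\le p_{t,k}$, as required.

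There is no substantive obstacle here beyond bookkeeping. The two points that need care are: keeping the number of colours bounded, which is exactly why we round to granularity $\delta$ and why it is convenient that $\delta=n^{-10}$ has an integer reciprocal; and checking that the bijective correspondence between an $(n+1)$-subset $Y$, its $k$-th element $x=y_k$, and the sample $T=Y\setminus\set{x}$ matches the function $\ord_T$, i.e.\ that $\ord_T(x)=k$ — which is the elementary order observation made above. Everything else is a direct invocation of hypergraph Ramsey.
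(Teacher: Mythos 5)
Your proof is correct and follows essentially the same route as the paper: colour each $(n{+}1)$-subset by the matrix of $\delta$-rounded output probabilities, apply the hypergraph Ramsey theorem to obtain a monochromatic set $X'$, and read off the common rounded values as the $p_{t,k}$. The only differences are cosmetic — you carry the colours as integers $c_{t,k}$ rather than as multiples of $\delta$, and you spell out the $\ord_T(x)=k$ bookkeeping and the $c_{t,k}=0$ edge case a little more explicitly.
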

\begin{proof}
The claim is a consequence of the hypergraph Ramsey theorem. Given any subset $T = \set{x_1, \dots, x_{n+1}} \subseteq [N]$ and $t,k \in [n+1]$, let
\[ q^{T}_{t,k} \coloneqq \Pr[\bm{h}^{T \setminus \set{x_k}}_{t}(x_k)=1], \] 
and let $p^T_{t,k}$ be $q^{T}_{t,k}$ rounded up to an integer multiple of $\delta$, namely
\[ p^T_{t,k} \coloneqq \left\lceil \frac{q^{T}_{t,k}}{\delta} \right\rceil  \delta.\]

Define the ``colour'' of the set $T$ as the matrix
\begin{align*}
    c(T) \coloneqq  [p^T_{t,k}]_{t,k \in [n+1]},
\end{align*}
and note that there are at most $\lceil \frac{2}{\delta} \rceil ^{(n+1) \times (n+1)}$ 
possible colours. By the hypergraph Ramsey theorem~\cite{ramsey1930}, for sufficiently large $N$, there exists $X' \subseteq [N]$ with $|X'| = M$ such that all subsets $T$ of $X'$ of size $n+1$ share the same colour $[p_{t,k}]_{t,k \in [n+1]}$. The set $X'$ and the values $p_{t,k}$ satisfy the claim.
\end{proof}

Let $X^\prime$ be as in \Cref{claim:SetX} for a sufficiently large $M$. By renaming the elements if necessary,  without loss of generality, we assume $X'=[M]$. Let $m^* \coloneqq \lfloor M/2 \rfloor$ be the median of $X'$. Let $\cD$ be the uniform probability distribution over the set
\[\supp(\cD) \coloneqq \set{(x, \1_{[x \ge m^*]}) : x \in [M]}. \]
In other words, we sample $\bm{x}$ uniformly at random from $[M]$ and label it according to the hypothesis $\1_{[x \ge m^*]}$.  We will show that the learner $\cA$ cannot satisfy \Cref{eq:main_thm_list1,eq:main_thm_list2}  for this distribution $\cD$, resulting in a contradiction.

Given $S \in ([M] \times \set{0,1})^n$, let $S_X \in [M]^n$ be the sequence obtained by removing the labels from the examples in $S$. Note that if $\bm{S} \sim \cD^n$, then $\bm{S}_X$ is uniformly distributed over $[M]^n$. 
 
Given a sample $S \in ([M] \times \set{0,1})^n$, let $t(S)$ denote the number of examples in $S$ with label $0$. Let $\Pi$ denote the set of $S \in \supp(\cD)^n$  with the following desired well-spread-ness properties:
\begin{enumerate}
\item  $S$ involves $n$ \emph{distinct} elements in $[M]$. In this case, we identify the sequence $S_X$ with the corresponding $n$-element subset of $[M]$. 
\item $t(S) \in [n/4,3n/4]$.
\item For every interval $I \subseteq [M]$ of size $\frac{M}{8}$, we have 
\begin{equation}
\label{eq:Pi_3}
\left||S_X \cap I| - \frac{n}{8} \right|\le \frac{n}{100}. 
\end{equation}
\item Denoting the elements of $S_X$ by $a_1< a_2< \cdots < a_n$, we have $a_1>\frac{M}{2^n}$, $a_n<M- \frac{M}{2^n}$, and $a_{i+1}>a_i+\frac{M}{2^n}$ for all $i=1,\ldots,n-1$. 
\end{enumerate}
By taking $M$ to be sufficiently large as a function of $n$ and applying Chernoff and union bounds, we have 
\[\Pr_{\bm{S} \sim \cD^n}[\bm{S} \in \Pi] \ge 1- 2^{-\Omega(n)}.\]  
Therefore, we may only focus on the uniformly chosen samples from $\Pi$. Note that the uniform distribution over $\Pi$ corresponds to sampling $\bm{S}\sim \cD^n$ conditioned on $\bm{S}\in \Pi$. 

For every $t \in   [\frac{n}{4},\frac{3n}{4}]$, define 
\[\Pi_t \coloneqq \set{S \in \Pi \ | \ t(S)=t}.\] 

Note that for every $a \in [M]$, we have 
\begin{align*} \Pr_{\bm{S} \sim \Pi_t}[a \in \bm{S}_X] &= \frac{\Pr_{\bm{S} \sim \cD^n}[(a \in \bm{S}_X) \wedge (t(\bm{S})=t)  \wedge (\bm{S} \in \Pi)]} {\Pr_{\bm{S} \sim \cD^n}[(t(\bm{S})=t)  \wedge (\bm{S} \in \Pi)]}  \\ 
&\le \frac{\Pr_{\bm{S} \sim \cD^n}[a \in \bm{S}_X]}{\Pr_{\bm{S} \sim \cD^n}[(t(\bm{S})=t)  \wedge (\bm{S} \in \Pi)]} 
=O_{n,\epsilon}\left(\frac{1}{M}\right),
\end{align*}
where $O_{n,\epsilon}(\cdot)$ indicates that the hidden constants in the bound may depend on $n$ and thus also $\epsilon$.
By the above discussion and \Cref{eq:main_thm_list2}, there exists an integer $t_0 \in [n/4, 3n/4]$ such that
\begin{equation}\label{eq:goodtcond1}
\Pr_{\bm{S} \sim \Pi_{t_0}}[\bm{\cA}(\bm{S}) \in \set{h_1,\dots,h_L}  ] \geq 1-\delta-2^{-\Omega(n)},
\end{equation}
and for every $a \in [M]$,
\begin{equation}\label{eq:goodtcond3}
\Pr_{\bm{S} \sim \Pi_{t_0}}[a \in \bm{S}_X] \le O_{n,\epsilon}\left(\frac{1}{M}\right).
\end{equation}
Fix such a $t_0$ for the rest of the proof.

For every $S \in \Pi_{t_0}$, define the function $f_S:[M] \to [0,1]$ as 
\[f_S(x) \coloneqq \Pr[\bm{\cA}(S)(x)=1].\] 
Since $X'=[M]$ satisfies the assertion of  \Cref{claim:SetX}, there exists values $p_k \coloneqq p_{t_0,k} \in [0,1]$ for $k \in [n+1]$ such that the following holds. For all $S \in \Pi_{t_0}$ and every $x \in X' \setminus S_X$, we have
\begin{equation} 
\label{eq:After_Ramsey} 
    f_S(x) = \Pr[\bm{\cA}(S)(x)=1] \in [p_k - \delta, p_k]  \text{ where }  k = \ord_{S_X}(x).
\end{equation}
Since $h_1, \dots, h_L$ all have low population losses, intuitively, for small $x$, $f_S(x)$ should be close to $0$ and for large $x$, $f_S(x)$ should be close to $1$. Thus, we expect to find $a, b$ such that $|p_b - p_a|$ is large, and consequently there must exist $1\leq c < n+1$ such that $|p_{c+1}-p_c|$ is not too small. 

\begin{claim}
\label{claim:2}
There exists $1\leq c < n+1$ such that $|p_{c+1}-p_c|\geq \frac{1}{2n}$.
\end{claim}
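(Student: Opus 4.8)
The plan is to exploit the telescoping identity $p_{n+1} - p_1 = \sum_{c=1}^{n} (p_{c+1} - p_c)$, which shows that if we can prove $|p_{n+1} - p_1| \ge \frac{1}{2}$, then some consecutive gap must have absolute value at least $\frac{1}{2n}$, giving the claim. So the real work is to show that $p_1$ is close to $0$ and $p_{n+1}$ is close to $1$ (or more precisely that they differ by at least a constant). Recall $p_k = p_{t_0,k}$ is the (rounded) probability $\Pr[\bm{h}^T_{t_0}(x)=1]$ where $x$ has order $k$ inside the sorted sample $T$. The index $k=1$ corresponds to $x$ being smaller than every sampled point, and $k = n+1$ to $x$ being larger than every sampled point.

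**First I would** pin down the target point. By property~4 in the definition of $\Pi$, any $S \in \Pi_{t_0}$ has $a_1 > M/2^n$ and $a_n < M - M/2^n$, and the sorted points are $M/2^n$-separated; in particular there is room below $a_1$ and above $a_n$. I want to produce, for a typical $S \in \Pi_{t_0}$, an element $x_{\mathrm{lo}} \in [M] \setminus S_X$ with $\ord_{S_X}(x_{\mathrm{lo}}) = 1$ and $f_S(x_{\mathrm{lo}})$ small, and an element $x_{\mathrm{hi}} \in [M] \setminus S_X$ with $\ord_{S_X}(x_{\mathrm{hi}}) = n+1$ and $f_S(x_{\mathrm{hi}})$ large. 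Concretely, I can take $x_{\mathrm{lo}}$ to be a point below $m^*$ that is smaller than $a_1$, so that the correct label of $x_{\mathrm{lo}}$ is $0$; and $x_{\mathrm{hi}}$ a point above $m^*$ larger than $a_n$, whose correct label is $1$. Averaging over $\bm{S}\sim\Pi_{t_0}$ and using \Cref{eq:goodtcond1} together with \Cref{eq:main_thm_list1} (the list hypotheses $h_1,\dots,h_L$ each have population loss $\le \epsilon < 1/n$), I can bound the expected disagreement of $\bm{\cA}(\bm{S})$ with the true labelling on these points. The subtlety is that $f_S$ averages over the internal randomness of $\cA$, while \Cref{eq:main_thm_list1} controls population loss of the deterministic list members; I handle this by noting that conditioned on $\bm{\cA}(\bm{S}) \in \{h_1,\dots,h_L\}$ (probability $\ge 1-\delta-2^{-\Omega(n)}$), the output agrees with the true label at all but an $\epsilon$-fraction of $[M]$ under $\cD$, i.e.\ at all but an $\epsilon$-fraction of points in $[M]$; then a point chosen appropriately (not too close to $m^*$, and using \Cref{eq:goodtcond3} to control the chance the target equals a sampled point) is correctly labelled with probability close to $1$.

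**The cleanest execution** is probably: fix any $S \in \Pi_{t_0}$; among the $M/2^n - 1$ integers strictly between $1$ and $a_1$ (or, to be safe about $\ord$, strictly below $a_1$ and below $m^*$ — these exist since $a_1 > M/2^n$ and $M$ is huge), at least a $(1-\epsilon)$-fraction of \emph{all} of $[M]$ is correctly labelled $0$ by any fixed $h_i$; so the average of $\Pr[\bm{h}^S_{t_0}(x)=1]$ over such $x$, further averaged over $S \sim \Pi_{t_0}$, is at most $\epsilon + \delta + 2^{-\Omega(n)}$ plus a $O_{n,\epsilon}(1/M)$ term from collisions with $S_X$. Since each such $x$ has $\ord_{S_X}(x)=1$, \Cref{eq:After_Ramsey} gives $f_S(x) \in [p_1-\delta, p_1]$, so $p_1 \le \epsilon + \delta + 2^{-\Omega(n)} + O_{n,\epsilon}(1/M) < 1/4$ once $M$ is large and recalling $\epsilon < 1/n$, $\delta = n^{-10}$, $n$ large. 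Symmetrically $p_{n+1} \ge 1 - \epsilon - \delta - 2^{-\Omega(n)} - O_{n,\epsilon}(1/M) > 3/4$. Hence $|p_{n+1}-p_1| \ge 1/2$, and telescoping finishes the claim. \textbf{The main obstacle} I anticipate is the bookkeeping needed to argue that a point with the desired order $k \in \{1, n+1\}$ is correctly labelled with high probability: one must simultaneously (a) avoid the target $x$ coinciding with a sampled point (handled via \Cref{eq:goodtcond3} after averaging over $S$), (b) ensure the correct label is as desired, which forces $x$ on the correct side of $m^*$, and (c) reconcile the randomized output $f_S$ with the population-loss guarantee on the deterministic list — all while the Ramsey-provided $p_k$ values are only defined through \Cref{eq:After_Ramsey} for $x \notin S_X$. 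None of these is deep, but they must be combined carefully with the right averaging order so that the $O_{n,\epsilon}(1/M)$ slack genuinely vanishes for large $M$.
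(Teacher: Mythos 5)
Your telescoping idea is exactly the one the paper uses, and it is correct: if two of the $p_k$'s differ by at least $\frac12$ and their indices differ by at most $n$, some consecutive gap is at least $\frac1{2n}$. The flaw is in the intermediate claim that $p_1 < \frac14$ (and symmetrically $p_{n+1} > \frac34$).

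To bound $p_1$ you restrict attention to $x$ with $\ord_{S_X}(x)=1$, i.e.\ $x \in [1, a_1-1]$ where $a_1$ is the smallest element of $S_X$. But $a_1-1$ can be far smaller than $\epsilon M$: property~4 of $\Pi$ only guarantees $a_1 > M/2^n$, and property~3 forces $a_1 \le M/8$; typically $a_1 \approx M/n$, and since $n>1/\epsilon$ (so $\epsilon > 1/n$), typically $a_1 < \epsilon M$. A list member $h_i$ with $\cL_\cD(h_i)\le\epsilon$ may misclassify up to $\epsilon M$ points of $[M]$, and all of them could be concentrated in $[1,a_1-1]$. In that case the average of $\Pr[\bm{h}^S_{t_0}(x)=1]$ over $x<a_1$ is close to $1$, not $O(\epsilon)$, and averaging over $S\sim\Pi_{t_0}$ does not rescue this because for most $S$ we still have $a_1(S)<\epsilon M$. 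So the step ``$p_1\le\epsilon+\delta+2^{-\Omega(n)}+O_{n,\epsilon}(1/M)$'' is not justified, and indeed one cannot prove that $p_1$ itself is small.

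The paper sidesteps this by averaging $\widehat{\bm{x}}$ over the fixed interval $[M/8]$, which is a constant fraction of $[M]$, so the $\le 8\epsilon$ error density bound is genuine. The price is that $\widehat{x}$ no longer has order $1$ but only some order $k\le n/4$ (using property~3 of $\Pi$), so one only concludes $\min_{k\le n/4} p_k = O(\epsilon)$, and symmetrically $\max_{k\ge 3n/4} p_k = 1-O(\epsilon)$. The telescoping then runs between the argmin and argmax rather than between $k=1$ and $k=n+1$; since those indices still differ by at most $n$, the conclusion $|p_{c+1}-p_c|\ge\frac1{2n}$ for some $c$ goes through unchanged. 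So your high-level strategy is right, but you need to replace ``$p_1$ small, $p_{n+1}$ large'' with ``some $p_k$ with $k\le n/4$ is small, some $p_k$ with $k\ge 3n/4$ is large,'' and obtain this by averaging over a fixed constant-measure interval rather than over the $S$-dependent first (or last) gap.
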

\begin{proof}
Let $\widehat{\bm{x}}$ be uniformly sampled from $[M/8]$.  
Since $h_1,\ldots, h_L$ have loss at most $\epsilon$, and since labeling $\widehat{\bm{x}}$ with $1$ is incorrect, we have 
\[ \Pr_{\widehat{\bm{x}} \sim [M/8]}[h_i(\widehat{\bm{x}})=1] \le 8  \Pr_{\bm{x} \sim [M]}[h_i(\bm{x})=1] \le 8 \epsilon \ \ \text{ for all } i=1,\ldots,L.  \] 
Therefore, using  \Cref{eq:goodtcond1}, we have  
\begin{align}
& \Pr_{\substack{\bm{S}\sim \Pi_{t_0}   \\ \widehat{\bm{x}} \sim [M/8]}} [\bm{\cA}(\bm{S})(\widehat{\bm{x}})= 1]
\nonumber \\ & \qquad\qquad \leq 
 \Pr_{\bm{S}\sim \Pi_{t_0}}[\bm{\cA}(\bm{S}) \notin \set{h_1, \dots, h_L}] + \Pr_{\substack{\bm{S}\sim \Pi_{t_0}  \\ \widehat{\bm{x}} \sim [M/8]}} [\bm{\mathcal{A}}(\bm{S})(\widehat{\bm{x}})=1 \ | \ \bm{\cA}(\bm{S}) \in \set{h_1, \dots, h_L}] \nonumber \\ & \qquad\qquad \leq 
\delta + 2^{-\Omega(n)} + 8\epsilon = O(\epsilon).\label{eq:falsepositives} 
\end{align}

Consider $S \in \Pi$. By \Cref{eq:Pi_3}, every $\widehat{x} \in [M/8] \setminus S_X$ satisfies $\ord_{S_X}(\widehat{x}) < n/4$. Therefore, using \Cref{eq:After_Ramsey}, 
\begin{align*} 
\Pr_{\substack{\bm{S}\sim \Pi_{t_0} \\ \widehat{\bm{x}} \sim [M/8]}} [\bm{\cA}(\bm{S})(\widehat{\bm{x}})= 1] &\ge \Pr_{\substack{\bm{S}\sim \Pi_{t_0} \\ \widehat{\bm{x}} \sim [M/8]}} [\bm{\cA}(\bm{S})(\widehat{\bm{x}})= 1\ | \ \bm{\widehat{x}} \not\in \bm{S}_X]- \Pr_{\substack{\bm{S}\sim \Pi_{t_0} \\ \widehat{\bm{x}} \sim [M/8]}}[ \bm{\widehat{x}} \in \bm{S}_X] \\
& \ge  \min_{k \le n/4} p_k-\delta  - \frac{n}{M/8} =  \min_{k \le n/4} p_k -O(\delta). 
\end{align*}
Combining with \Cref{eq:falsepositives}, we get 
\[
\min_{k \le n/4} p_k = O(\epsilon). 
\] 

Using a similar argument, by considering $\widehat{x} \sim \left[\frac{7M}{8} ,M\right]$, we obtain 
\[
\max_{k \ge 3n/4} p_k = 1-O(\epsilon).
\] 
It follows that 
\[
\bigg|\max_{k \ge 3n/4} p_k  - \min_{k \le n/4} p_k\bigg| \geq \frac{1}{2},
\] 
and therefore there exists some $c \in [n]$ such that $|p_{c+1} - p_c| \geq \frac{1}{2n}$.
\end{proof}

Let $c$ be as in \Cref{claim:2}, and suppose that $c \leq t_0$ without loss of generality.  Call a sample $S \in \Pi_{t_0}$ \emph{good}  if 
\[\Pr[\bm{\cA}(S) \in \set{h_1, \dots, h_L}]\geq 1 - \sqrt{\delta}.\] 

By \Cref{eq:goodtcond1}, we have 
\[\delta + 2^{-\Omega(n)} \ge \Pr_{\bm{S}\sim \Pi_{t_0}}[\bm{\mathcal{A}}(\bm{S})\not\in \{h_1,\dots, h_L\}] \ge \Pr_{\bm{S}\sim \Pi_{t_0}} [ \bm{S} \text{ is not good}] \times \sqrt{\delta}. \] 
Consequently, 
\begin{equation}\label{eq:goodsampprob}
\Pr_{\bm{S}\sim \Pi_{t_0}} [ \bm{S} \text{ is good}] \geq 1-\sqrt{\delta}-\frac{2^{-\Omega(n)}}{\sqrt{\delta}}\geq 1-2\sqrt{\delta}.
\end{equation}

Given any good $S \in \Pi_{t_0}$, define 
\[\overline{h_S} = \sum_{i=1}^L \Pr[\bm{\cA}(S) =h_i | \bm{\cA}(S) \in \set{h_1,\ldots,h_L}] \times h_i,  \] 
and note that by the definition of goodness, we have
\[ |\overline{h}_S(x)-f_S(x)| \le \sqrt{\delta} \ \ \text{ for all } x \in [M]. \] 
The function $\overline{h}_S$ is a convex combination of $h_1,\ldots, h_L$ that pointwise $\sqrt{\delta}$-approximates $f_S$.  Let $G$ be a maximal set of functions $X'\rightarrow [0,1]$ such that 
\begin{enumerate}
    \item Every function $g\in G$ is a convex combination of $h_1,\ldots, h_L$, and 
    \item For every pair of distinct functions $g_1,g_2\in G$, there exists $x\in X^\prime$ such that $|g_1(x)-g_2(x)|\geq \delta$.
\end{enumerate}
By the above two conditions and the above discussion, for any good $S$, there exists $g\in G$ with 
\[\norm{f_S-g}_\infty \le \delta+\sqrt{\delta} \leq 2\sqrt{\delta}.\] 
\begin{claim}\label{claim:small_num_functions}
$|G|\leq O(1/\delta)^L$. 
\end{claim}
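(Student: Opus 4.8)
The plan is to show that $|G| \le O(1/\delta)^L$ by exhibiting an injection from $G$ into a set of size $O(1/\delta)^L$, using the structure of the convex combinations together with the ``$\delta$-separation'' property (2). The key point is that every $g \in G$ is a convex combination $g = \sum_{i=1}^L \lambda_i h_i$ with $\lambda \in \Delta^{L-1}$ (the simplex), so $g$ is determined by the coefficient vector $\lambda$; thus it suffices to bound the number of coefficient vectors that can arise.

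First I would argue that the $\delta$-separation property in the $L_\infty$ norm on $X'$ translates to a separation in the coefficient space, but this requires care since distinct coefficient vectors can yield the same function (if the $h_i$ are affinely dependent as functions on $X'$). The cleanest route is to first restrict to an affinely independent subset: pick a maximal subset $\{h_{i_1}, \dots, h_{i_r}\} \subseteq \{h_1, \dots, h_L\}$ that is affinely independent as vectors in $\R^{X'}$, so $r \le L$. Every convex combination of $h_1,\dots,h_L$ is an affine combination of $h_{i_1},\dots,h_{i_r}$, and the representing coefficients in this affine basis are now unique. Since each $h_{i_j}$ takes values in $\{0,1\}$ and each $g \in G$ takes values in $[0,1]$, and moreover the barycentric coordinates of $g$ with respect to the affine basis lie in a bounded region of $\R^r$ (one can bound them by, say, using that evaluating at the $r$ coordinates of $X'$ where the $h_{i_j}$ differ gives an invertible linear system with entries in $\{0,1\}$, whose inverse has entries bounded by $r! = O(1)$ since $L$ is a fixed constant), the coordinate vectors of all $g\in G$ lie in a ball of radius $R = O(1)$ in $\R^r$.

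Next I would use property (2): for distinct $g_1, g_2 \in G$ there is $x \in X'$ with $|g_1(x) - g_2(x)| \ge \delta$, so $\|g_1 - g_2\|_\infty \ge \delta$ as functions on $X'$; since $g_1 - g_2$ is, in the affine basis, a \emph{linear} combination $\sum_j (\mu_j^{(1)} - \mu_j^{(2)}) (h_{i_j} - h_{i_r})$ of the $r-1$ independent differences, a bounded-below $L_\infty$ norm forces the coordinate difference vector to have norm at least $c\delta$ for some constant $c = c(L) > 0$ (again using that the relevant linear map is invertible with bounded inverse). Hence the coordinate vectors of the elements of $G$ form a $c\delta$-separated set inside a ball of radius $O(1)$ in $\R^{r-1}$ (one coordinate is pinned by the affine constraint $\sum \mu_j = 1$), and a standard volume/packing argument bounds the cardinality of such a set by $(O(1)/\delta)^{r-1} \le O(1/\delta)^L$.

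The main obstacle I anticipate is handling the potential affine dependence of the $h_i$ cleanly — i.e., making sure the ``coordinates'' of $g$ are well-defined and that $\delta$-separation in function values genuinely implies $\Omega(\delta)$-separation of coordinates with a constant depending only on the fixed parameter $L$ (not on $\delta$, $M$, or $n$). Once one commits to passing to an affinely independent sub-basis and tracking that all the relevant invertible linear maps between the $(\le L)$-dimensional coordinate space and the evaluations on a fixed finite set of points have operator norms and inverse operator norms bounded in terms of $L$ alone, the rest is a routine $\epsilon$-net volume bound. (If one prefers to avoid linear algebra altogether, an alternative is to note that $g$ is determined up to additive $\delta/2$ error on $X'$ by rounding each $\lambda_i$ to a multiple of $\delta / (2L)$ — there are $(2L/\delta)^L = O(1/\delta)^L$ such rounded vectors, and any two $g$'s with the same rounding are within $\delta/2 < \delta$ everywhere, contradicting (2) unless they are equal; this gives the bound directly and sidesteps the affine-dependence issue.)
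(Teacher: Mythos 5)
Your proposal is correct, and your rounding alternative in the final parenthetical is the cleanest way to prove the claim; I'd lead with that rather than the affine-independence argument.

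The paper avoids the affine-dependence issue you flag by never passing to coordinates at all: it works directly in the at-most-$L$-dimensional subspace $V = \mathrm{span}(h_1,\dots,h_L) \subseteq \R^{X'}$, sets $B = V \cap [0,1]^{X'}$, and packs the disjoint translates $g_i + \tfrac{\delta}{2}B$ (disjoint by the $\delta$-separation property) inside $(1+\tfrac{\delta}{2})B$, giving $|G| \le \bigl(\frac{1+\delta/2}{\delta/2}\bigr)^{\dim V} \le O(1/\delta)^L$ by volume comparison. This is the same packing idea as your primary argument, but by staying in function space it sidesteps entirely the need to choose an affinely independent subset, bound determinants of $\{0,\pm 1\}$ matrices, and relate $L_\infty$-separation of functions to $\ell_\infty$-separation of barycentric coordinates. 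Your primary route does go through, provided you also note that the $\{-1,0,1\}$-valued matrix you invert has $|\det|\ge 1$ (so its inverse has entries bounded in terms of $L$ alone via cofactors and Hadamard), but it is more work for the same bound.

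Your rounding argument is a genuinely different route and arguably the most elementary: fix one convex-combination representation $g = \sum_i \lambda_i h_i$ per $g\in G$, round each $\lambda_i$ to the nearest multiple of $\delta/(2L)$, observe that two $g$'s with the same rounded vector satisfy $\|g_1-g_2\|_\infty \le L\cdot\frac{\delta}{2L} = \delta/2 < \delta$ and hence coincide by property (2), and count rounded vectors: $O(L/\delta)^L = O(1/\delta)^L$ since $L$ is a constant here. This is a pure counting argument, needs no volume or linear algebra, and handles the affine-dependence issue for free (it never needs uniqueness of the coefficients). Either your rounding argument or the paper's direct packing in $V$ would be a suitable proof; your primary affine-independence route is correct but the least economical of the three.
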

\begin{proof}
Denote by $V$ the set of linear combinations of $h_1, \ldots, h_L$, and let $B=V\cap [0,1]^{X'}$. For any $\lambda\leq 1$, define $\lambda B\coloneqq \set{\lambda  g \ | \ g\in B }$. Suppose $m=|G|$, and name the functions in $G$ as $g_1,\ldots, g_m$. Define 
\[
B_i= g_i + \frac{\delta}{2}B.
\]
Now note that $B_1, \ldots, B_m$ are disjoint subsets of $(1+\frac{\delta}{2})B$. Thus the volume of $\cup_i B_i$ is bounded by that of $(1+\delta/2)B$, and we get that $m\leq \left(\frac{1+\delta/2}{\delta/2}\right)^L= O(1/\delta)^L$. 
\end{proof}

Given $S \in \Pi_{t_0}$ where the elements of $S_X$ are ordered as $a_1< a_2< \cdots < a_n$, we define the $i$-th interval of $S$, for $i \in [n+1]$, as $(a_{i-1},a_{i})$ when $i>1$, and $(1,a_1)$ when $i=1$. 

\begin{claim}
\label{claim:intersection}
    There exist $g \in G$ and two good samples $S_1$ and $S_2$ such that \[ \norm{f_{S_1}-g}_\infty \le 2 \sqrt{\delta} \ \text{ and } \  \norm{f_{S_2}-g}_\infty \le 2\sqrt{\delta},\]  and moreover, there is an element $x \in [M]$ that belongs to the $c$-th interval of $S_1$ and the $c+1$-th interval of $S_2$. 
\end{claim}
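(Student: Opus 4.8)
The plan is a two-level pigeonhole argument that plays the \emph{constant} bound $|G|\le O(1/\delta)^L$ from Claim~\ref{claim:small_num_functions}—constant because $\epsilon$, and hence $n$, $L$, and $\delta=n^{-10}$, are fixed—against the number of available samples, which grows with $M$ and is chosen only afterwards.

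The combinatorial core is the following observation. Recall $c\le t_0<n$. For $S\in\Pi_{t_0}$ with increasing elements $a_1<\cdots<a_n$, call $\mathrm{base}(S)\coloneqq S_X\setminus\{a_c\}$ its \emph{base}, an $(n-1)$-element subset of $[M]$. Two samples $S_1=B\cup\{y_1\}$ and $S_2=B\cup\{y_2\}$ that share a base $B=\{b_1<\cdots<b_{n-1}\}$ satisfy $b_{c-1}<y_i<b_c$ (with the convention $b_0\coloneqq 0$), so their increasing orderings read $b_1,\dots,b_{c-1},y_i,b_c,\dots,b_{n-1}$; hence, when $y_2<y_1$, the $c$-th interval of $S_1$ is $(b_{c-1},y_1)$ and the $(c+1)$-th interval of $S_2$ is $(y_2,b_c)$, and any integer $x$ with $y_2<x<y_1$ lies in both (and such an $x\in[M]$ exists once $y_1-y_2\ge 2$). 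Thus it suffices to exhibit one base $B^*$ carrying two \emph{good} samples $B^*\cup\{y_1\}$ and $B^*\cup\{y_2\}$ with $y_1-y_2\ge 2$ and $g(B^*\cup\{y_1\})=g(B^*\cup\{y_2\})$, where for each good $S$ we fix $g(S)\in G$ with $\norm{f_S-g(S)}_\infty\le 2\sqrt\delta$ (possible by the discussion preceding Claim~\ref{claim:small_num_functions}); then $g\coloneqq g(B^*\cup\{y_1\})$, together with $S_1\coloneqq B^*\cup\{y_1\}$, $S_2\coloneqq B^*\cup\{y_2\}$ and that $x$, witness the claim.

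To build $B^*$, I would use~\eqref{eq:goodsampprob} to get that the set $\mathcal W$ of good samples in $\Pi_{t_0}$ has $|\mathcal W|\ge\tfrac12|\Pi_{t_0}|$, and then pigeonhole $\mathcal W$ over the at most $\binom{M}{n-1}\le M^{n-1}$ possible bases: some base $B^*$ carries at least $|\Pi_{t_0}|/(2M^{n-1})$ good samples, which necessarily have pairwise distinct pivots; keeping every other pivot in increasing order leaves at least $|\Pi_{t_0}|/(4M^{n-1})$ good samples whose pivots are pairwise at distance $\ge 2$. Once this quantity exceeds $|G|$, a final pigeonhole over $G$ produces the two samples mapped to the same element of $G$. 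So the whole argument reduces to the inequality $|\Pi_{t_0}|/M^{n-1}>4|G|$, which—since $|G|$ is a constant while $M$ may be enlarged at the very end—follows from $|\Pi_{t_0}|\ge M^n/\poly(n)$.

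The one place I expect genuine care is exactly this lower bound on $|\Pi_{t_0}|$: the $t_0$ supplied by~\eqref{eq:goodtcond1} could a priori sit near an end of $[n/4,3n/4]$ (indeed, since $\Pi_{t_0}\neq\emptyset$, the well-spreadness condition~\eqref{eq:Pi_3} already confines $t_0$ to within a small constant fraction of $n/2$), and even there $\Pr_{\bm S\sim\cD^n}[t(\bm S)=t_0]$ can be exponentially small in $n$, conceivably swamped by $\Pr_{\bm S\sim\cD^n}[\bm S\notin\Pi]$. The remedy is that we are free to re-select $t_0$: relaxing the error bound in~\eqref{eq:goodtcond1} from $\delta+2^{-\Omega(n)}$ to $O(\delta)$ (harmless everywhere downstream—only the constants in~\eqref{eq:goodsampprob} and~\eqref{eq:falsepositives} change), Markov's inequality shows that~\eqref{eq:goodtcond1} holds for a set of values of $t(\bm S)$ of total $\Pr[\cdot\mid\bm S\in\Pi]$-probability $\ge\tfrac12$, and this set meets the window $[n/2-2\sqrt n,\,n/2+2\sqrt n]$, on which a local limit estimate gives $\Pr[t(\bm S)=t_0\mid\bm S\in\Pi]=\Theta(1/\sqrt n)$ for every point; choosing $t_0$ there yields $|\Pi_{t_0}|=\Theta(M^n/\sqrt n)$, as needed. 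Beyond this bookkeeping, the only real idea is recognizing that ``agrees off the pivot $a_c$'' is the right equivalence relation to pigeonhole over: samples with a common base share both endpoints of the merged $c$-th/$(c+1)$-th gap, so their $c$-th and $(c+1)$-th intervals \emph{always} overlap once their pivots are ordered—whereas for unrelated samples these intervals, of width merely $\ge M/2^n$, would overlap only with exponentially small probability and no such clean pigeonhole would be available.
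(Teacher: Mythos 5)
Your proof is correct, and it takes a genuinely different pigeonhole from the paper's. The paper pigeonholes over the \emph{value} of the pivot $a_c$: it shows (via the anti-concentration bound \eqref{eq:goodtcond3}) that the set $A$ of achievable pivot values among good samples has size $\Omega_{n,\epsilon}(M)$, picks a fixed $g^*\in G$ covering $\Omega_{n,\epsilon}(M)$ of them, and then finds two pivots $a>b$ with $2\le a-b\le M/2^{n+1}$; the interval overlap is then forced by the \emph{spread property}~(4) of $\Pi$, which guarantees consecutive elements of each $S_X$ are $\ge M/2^n$ apart, so the merged $c$-th/$(c{+}1)$-th gaps around $a$ and around $b$ intersect. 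You instead pigeonhole over the \emph{base} $S_X\setminus\{a_c\}$, and the interval overlap comes for free from the fact that two samples sharing a base share both endpoints $(b_{c-1},b_c)$ of the merged gap — no appeal to property~(4) is needed at this point. The price you pay is a finer count: you need $|\Pi_{t_0}|\gtrsim M^{n-1}$ rather than the anti-concentration estimate \eqref{eq:goodtcond3}, and you correctly note that this forces a more careful choice of $t_0$ (the paper's ``there exists $t_0$'' is terse about exactly this: one must average to find a $t_0$ that is both good and has $\Pr[t(\bm S)=t_0\wedge\bm S\in\Pi]=\Omega_n(1)$, which a Markov-plus-pigeonhole argument over the $O(n)$ possible values of $t$ supplies). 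Your local-CLT fix works, although the simpler pigeonhole over the $O(n)$ values of $t$ already gives $\Pr[t(\bm S)=t_0\wedge\bm S\in\Pi]=\Omega(1/n)$, hence $|\Pi_{t_0}|=\Omega(M^n/n)$, which suffices. One minor correction to your closing remark: the paper's pivot-value pigeonhole \emph{is} available and clean — it is not the case that unrelated samples never overlap; pivots within $M/2^{n+1}$ of each other always do by property~(4). What your base pigeonhole buys is avoiding property~(4) in this step at the cost of a stronger cardinality requirement.
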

\begin{proof}
Let $A\subseteq [M/2]$ be the set of $a\in [M/2]$ for which there exists a good sample $S\in \Pi_{t_0}$ such that the $c$-th smallest element of $S_X$ equals $a$. For every $a\in A$, let $S^a$ represent an arbitrary choice of such a good sample. Combining \Cref{eq:goodtcond3,eq:goodsampprob}, we have $|A|= \Omega_{n,\epsilon}(M)$. 
 
Given any $g\in G$, let $A_g$ denote the set of all $a\in A$ with $\norm{f_{S^a}-g}_\infty\leq 2\sqrt{\delta}$. Recall that for every good $S$, there exists $g\in G$ with $\norm{f_{S}-g}_\infty\leq 2\sqrt{\delta}$. Therefore, by \Cref{claim:small_num_functions}, there exists a fixed $g^*\in G$ with 
\[|A_{g^*}|\geq |A|\cdot O(\delta)^L= \Omega_{n,\epsilon}(M).\]

Finally, choosing $M$ sufficiently large guarantees that $|A_{g^*}| \times  M/2^{n+1} \gg M$ and thus there exist $a,b\in A_{g^*}$ such that $a>b$ and  
\[ 2\leq |a-b|\leq M/2^{n+1}. 
\]
Recalling that the gap between every two elements of $S^a_X$ and similarly for $S^b_X$ is at least $M/2^n$, the above guarantees the existence of an element $x$ in the intersection of the $(c+1)$-th interval of $S_1\coloneqq S^a$ and the $c$-th interval of $S_2\coloneqq S^b$. 
\end{proof}

Let $S_1$, $S_2$, $g$, and $x$ be as guaranteed by \Cref{claim:intersection}. In this case, 
\[|f_{S_1}(x)-f_{S_2}(x)| \le \norm{f_{S_1}-g}_\infty + \norm{f_{S_2}-g}_\infty \le 4 \sqrt{\delta}.  \] 
Since $x$ belongs to the $c$-th interval of $S_1$ and $c+1$-th interval of $S_2$, by \Cref{eq:After_Ramsey}, we have $|p_c - f_{S_1}(x)| \le \delta$ and $|p_{c+1} - f_{S_2}(x)| \le \delta$. Therefore, 
\[|p_{c}-p_{c+1}| \le 2 \delta+ 4 \sqrt{\delta}.\] 
For sufficiently large $n$, this inequality contradicts our choice of $c$ from \Cref{claim:2} which satisfies $|p_{c+1}-p_c|\geq \frac{1}{2n}$. 
\end{proof}

\section{\Cref{thm:main}~(ii): Stability from finite Littlestone dimension} \label{sec:(ii)}

In~\cite{BLM20}, Bun, Livni, and Moran showed that every class with finite Littlestone dimension has a \emph{globally} stable learner in the \emph{realizable case}. 

\begin{theorem}[Global Stable Learning from Finite Littlestone Dimension, \cite{BLM20}] \label{thm:global_stable}
Suppose $\cH \subseteq \set{0,1}^X$ satisfy $\ldim(\cH) \leq d$. Then there exists a sample complexity $n:(0,1) \to  \mathbb{N}$ and a learning rule $\bm{\cA}$ such that for every $\epsilon > 0$, and every realizable $\cD$, there exists a hypothesis $h \in \cH$ with  
    \[
      \cL_{\cD}(h) \leq \epsilon\ \ \text{ and }\ \ \Pr_{\bm{S} \sim \cD^n } [\bm{\cA}(\bm{S}) = h] \geq \frac{1}{(d+1)2^{2^d + 1}} \ \text{ where } n=n(\epsilon).
    \]
\end{theorem}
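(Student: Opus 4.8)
The plan is to reconstruct the Bun--Livni--Moran construction, whose engine is Littlestone's Standard Optimal Algorithm ($\mathsf{SOA}$): on any sequence of labeled examples consistent with a member of $\cH$, $\mathsf{SOA}$ makes at most $d=\ldim(\cH)$ mistakes, since it always predicts the label that preserves the larger Littlestone dimension of the running version space $V_t=\{h\in\cH:h\text{ agrees with }z_1,\dots,z_t\}$, and hence every mistake strictly drops $\ldim(V_t)$. I would first record two reductions. \emph{(a) Generalization.} It suffices to produce a learner that, for every realizable $\cD$, outputs some \emph{fixed} $h\in\cH$ consistent with a large i.i.d.\ block of examples with probability at least $\frac{1}{(d+1)2^{2^d+1}}$: since $\ldim(\cH)<\infty$ forces $\VCdim(\cH)<\infty$, the uniform convergence property lets the learner reserve a separate block of $\mathrm{poly}(1/\epsilon)$ fresh examples and discard any candidate not (nearly) consistent with it, turning ``consistent with that block'' into ``$\cL_\cD(h)\le\epsilon$'' and pushing all $\epsilon$-dependence into the sample size $n(\epsilon)$. \emph{(b) Order.} By \Cref{rem:symmetry} we may feed $\mathsf{SOA}$ the examples in a uniformly random order.

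So the heart of the matter is a \emph{stability} statement: running $\mathsf{SOA}$ on an i.i.d.\ sample and halting at a suitable time, the output, viewed as a random hypothesis, must put mass bounded below by a constant depending on $d$ only onto a single hypothesis. The mechanism is to track the integer sequence $d\ge\ldim(V_0)\ge\ldim(V_1)\ge\cdots\ge 0$, which is non-increasing with values in $\{0,\dots,d\}$ and therefore constant on a window of $\ge m/(d+1)$ consecutive indices; halting at a uniformly random time lands in such a ``Littlestone plateau'' with probability $\ge\frac1{d+1}$ (this supplies the $\frac1{d+1}$ factor). One then argues that, conditioned on the plateau and on its Littlestone level, the behavior of $\mathsf{SOA}$ is governed by a bounded amount of combinatorial data --- essentially a labeling of a depth-$\le d$ mistake tree, of which there are at most $2^{2^d}$ --- so the output is supported on at most $2^{2^d}$ hypotheses; pigeonhole over this $d$-only set, together with the lower-order cost of conditioning on consistency with the reserved block (the extra factor $\tfrac12$), yields a single $h$ output with probability $\ge\frac{1}{(d+1)2^{2^d+1}}$. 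Replacing $\mathsf{SOA}$'s predictor by a canonical (say lexicographically least) consistent member of the corresponding version space makes the output genuinely lie in $\cH$, and reduction (a) finishes.

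I expect the plateau-to-bounded-support step to be the real obstacle. The subtlety is that $V_t$ keeps shrinking even while its Littlestone dimension is constant, so the $\mathsf{SOA}$ predictor can still drift within a plateau; one must isolate precisely the part of $V_t$ that determines predictions, show it ranges over a set whose size is a function of $d$ alone --- independent of the sample length $m$, which itself grows with $1/\epsilon$ --- and do all of this with \emph{multiplicative} rather than additive losses, which is exactly the constraint forcing the final constant to be a fixed (if large) function of $d$ rather than something decaying in $m$. A secondary point needing care is that generalization must be applied to the \emph{particular} output hypothesis rather than to $\cH$ as a whole; this is the purpose of the reserved fresh block in reduction (a).
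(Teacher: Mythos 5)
This theorem is quoted from Bun--Livni--Moran~\cite{BLM20} and the paper does not prove it; it is used as a black box to prove \Cref{lem:For_ii}. There is therefore no proof in the paper against which to compare your sketch, and the only meaningful question is whether it is a correct reconstruction of the BLM argument.

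Your reductions (a) and (b) are fine and standard, and the structural intuition (SOA, at most $d$ mistakes, a pigeonhole supplying the factor $\tfrac{1}{d+1}$, a depth-$d$-tree-flavored count supplying $2^{2^d}$) is pointing at the right ingredients. But the plateau-to-bounded-support step, which you flag as the real obstacle, is not a detail to smooth out: as stated it is false, and no version of it alone saves the argument. A plateau of $\ldim(V_t)$ at some fixed level gives you a window on which SOA makes no mistakes, but it puts no function-of-$d$ bound on the set of predictors SOA can emit at a random time in that window once you range over samples. Inside the window $V_t$ keeps shrinking by correct examples, the SOA predictor depends on the full combinatorial structure of $V_t$ and not merely on $\ldim(V_t)$, and two samples reaching a plateau at the same level can have version spaces that are disjoint as subsets of $\cH$ and hence have SOA predictors with nothing in common. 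Counting leaf-labelings of a depth-$d$ tree does not control this, because the relevant object is $V_t$ itself, which varies over infinitely many subclasses. The actual BLM proof does not rest on a plateau at all: their $\tfrac{1}{d+1}$ is a pigeonhole over the discrete number of SOA mistakes, and the $2^{2^d}$ comes from a recursive agreement construction that runs several independent copies of a lower-level learner, compares outputs, and feeds disagreement points back in as a bounded auxiliary prefix of examples; this is what forcibly collapses the possible outputs to a set whose size depends on $d$ only. That agreement machinery is precisely the ingredient your sketch is missing, and the plateau observation cannot replace it.
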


We show that the algorithm of~\cite{BLM20} is already essentially an agnostic $\rho$-global stable learner for these classes for some $\rho:(0,1)\rightarrow (0,1)$. For a minor technical reason,  in the following lemma, we require that the population loss of the distribution $\cD$ is bounded away from $1$. The constant $2/3$ in the statement of the lemma is quite arbitrary and can be replaced by any larger constant strictly less than $1$.

\begin{lemma}[Agnostic $\rho$-Global Stable Learning of Classes with Finite Littlestone Dimension]
\label{lem:For_ii}
Suppose $\cH \subseteq \set{0,1}^X$ satisfy $\ldim(\cH) \leq d$.
There exists a learning rule $\bm{\cA}$, a stability parameter $\rho: (0,1) \to (0,1)$, a sample complexity $n:(0,1) \to \mathbb{N}$,  such that for every $\epsilon > 0$ and every distribution $\cD$ with $\cL_\cD(\cH)< \frac{2}{3}$, there exists a hypothesis $h \in \cH$ with 
\[\cL_{\cD}(h) \leq \cL_{\cD}(\cH) + \eps \ \  \text{and }\ \ 
        \Pr_{\bm{S} \sim \cD^n } [\bm{\cA}(\bm{S}) = h] \geq \rho(\eps).
    \]
Moreover, $\rho$ is given explicitly by $ \rho(\epsilon) = \frac{1}{(d+1)2^{2^d+1}4^{n(\eps)}}.$
   
\end{lemma}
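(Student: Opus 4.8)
The plan is to reduce the agnostic case to the realizable case by an observation about the globally stable learner of \cite{BLM20}: that algorithm, at its core, is built from a rule that ignores mislabeled examples in a way that only depends on the hypothesis class and not on whether the sample is realizable. More concretely, I would run the algorithm of \Cref{thm:global_stable} directly on the agnostic sample $\bm{S} \sim \cD^n$. The key point is that, conditioned on the sample being ``clean'' — i.e., consistent with some $h \in \cH$ — the algorithm behaves exactly as in the realizable case. So the strategy is: (1) fix $h^* \in \cH$ with $\cL_\cD(h^*) \le \cL_\cD(\cH) + \epsilon/2$; (2) argue that with probability bounded below by roughly $4^{-n}$ (or some explicit function of $n$), a sample of size $n$ drawn from $\cD$ is entirely consistent with $h^*$; (3) condition on this event, observe that the conditional distribution is a realizable distribution for $\cH$ (realized by $h^*$), apply \Cref{thm:global_stable} to get a hypothesis $h$ with $\cL_{\cD'}(h) \le \epsilon/2$ where $\cD'$ is a suitable realizable proxy, and $\Pr[\bm{\cA}(\bm{S}) = h \mid \text{clean}] \ge \frac{1}{(d+1)2^{2^d+1}}$; (4) multiply the two probabilities to get the claimed $\rho(\epsilon) = \frac{1}{(d+1)2^{2^d+1}4^{n(\epsilon)}}$.

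The main subtlety — and the reason the lemma restricts to $\cL_\cD(\cH) < 2/3$ — is step (2): the event that all $n$ i.i.d. examples agree with $h^*$ has probability $(1-\cL_\cD(h^*))^n$, which is only bounded below by a fixed function of $n$ if $\cL_\cD(h^*)$ is bounded away from $1$. Since $\cL_\cD(\cH) < 2/3$, we may take $h^*$ with $\cL_\cD(h^*) < 2/3 + \epsilon/2$, so (shrinking things as needed, say for $\epsilon$ small) the probability of a clean sample is at least $(1/4)^n = 4^{-n}$, which explains the exponent. I would also need to handle the mismatch between the realizable distribution ``$\cD$ conditioned on agreement with $h^*$'' and the target population loss with respect to $\cD$ itself: conditioning reweights the mass, so I would argue that the error of any hypothesis under the conditioned distribution $\cD'$ is within $\epsilon/2$ of (a scaled version of) its error under $\cD$, or alternatively choose the realizable proxy distribution to be $\cD'$ and then translate the guarantee $\cL_{\cD'}(h) \le \epsilon'$ back to a guarantee $\cL_\cD(h) \le \cL_\cD(\cH) + \epsilon$ by choosing $\epsilon'$ appropriately small and using that the algorithm's output $h$ lies in $\cH$.

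Actually, a cleaner route avoids reweighting entirely: instead of conditioning $\cD$, define the realizable distribution $\cD^*$ to be the distribution of $(\bm x, h^*(\bm x))$ where $\bm x$ has the same marginal as under $\cD$. Then $\cD^*$ is realizable (by $h^*$), and a sample from $\cD^*$ can be coupled to a sample from $\cD$: they agree precisely on the clean event, whose probability is $(1-\cL_\cD(h^*))^n \ge 4^{-n}$ (or some explicit bound using $\cL_\cD(h^*) < 2/3 + \epsilon/2 < 3/4$ for small $\epsilon$). Applying \Cref{thm:global_stable} to $\cD^*$ with error parameter $\epsilon/2$ yields $h \in \cH$ with $\cL_{\cD^*}(h) \le \epsilon/2$ and $\Pr_{\bm T \sim (\cD^*)^n}[\bm{\cA}(\bm T) = h] \ge \frac{1}{(d+1)2^{2^d+1}}$. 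Under the coupling, $\Pr_{\bm S \sim \cD^n}[\bm{\cA}(\bm S) = h] \ge 4^{-n} \cdot \frac{1}{(d+1)2^{2^d+1}} = \rho(\epsilon)$. It remains to bound $\cL_\cD(h)$: since $\cL_{\cD^*}(h) \le \epsilon/2$ and $h, h^*$ disagree with $\cD^*$-probability at most $\epsilon/2$, they disagree on a set of $\bm x$-mass at most $\epsilon/2$ under $\cD$ as well, so $\cL_\cD(h) \le \cL_\cD(h^*) + \epsilon/2 \le \cL_\cD(\cH) + \epsilon$, as desired. The main obstacle, as noted, is purely the quantitative control of the clean-sample probability, which forces the $4^{-n}$ loss and the $\cL_\cD(\cH) < 2/3$ hypothesis; everything else is a direct invocation of \Cref{thm:global_stable} plus a coupling argument.
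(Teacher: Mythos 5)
Your primary plan (the one laid out in steps (1)--(4)) is essentially identical to the paper's proof: fix $h^*\in\cH$ near-optimal, condition $\cD$ on agreement with $h^*$ to obtain a realizable $\cD'$, observe that a sample from $\cD^n$ that happens to be ``clean'' is distributed as $(\cD')^n$, apply \Cref{thm:global_stable} to $\cD'$, multiply by the clean-sample probability $(1-\gamma)^n \ge 4^{-n}$, and conclude with a short triangle-inequality error bound $\cL_\cD(h)\le\cL_{\cD'}(h)+\cL_\cD(h^*)$. Your opening claim that one must ``observe'' that the BLM20 algorithm internally ignores mislabeled examples is not needed and is not used in what you actually write: the argument treats the realizable stable learner as a black box and only uses its guarantee on the realizable distribution $\cD'$.

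However, the ``cleaner route'' you propose afterwards has a genuine gap. You define $\cD^*$ to have the same $x$-marginal as $\cD$ but with labels given by $h^*$, apply \Cref{thm:global_stable} to $\cD^*$, and then assert that under the natural coupling (same $\bm x_i$'s, with $\bm y_i$ either true labels or $h^*(\bm x_i)$) you get
\[
\Pr_{\bm S\sim\cD^n}[\bm{\cA}(\bm S)=h]\;\ge\;\Pr_{\bm T\sim(\cD^*)^n}[\bm{\cA}(\bm T)=h]\cdot\Pr[\text{clean}].
\]
This multiplication is not justified. Both the event $\{\bm{\cA}(\bm T)=h\}$ and the clean event $\{\bm S=\bm T\}$ depend on the $\bm x_i$'s, and they need not be independent or positively correlated across that shared randomness. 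Writing $p(x)\coloneqq\Pr[\bm y=h^*(x)\mid\bm x=x]$, what the coupling actually gives is
\[
\Pr[\bm{\cA}(\bm S)=h]\;\ge\;\E_{\bm x,\bm r}\!\left[\1\{\bm{\cA}(\bm T(\bm x),\bm r)=h\}\cdot\textstyle\prod_i p(\bm x_i)\right],
\]
and there is no lower bound of this by $\E[\1\{\bm{\cA}(\bm T)=h\}]\cdot\E[\prod_i p(\bm x_i)]$ in general; indeed one can construct $\cD$ and an algorithm consistent with the guarantee where the two events are nearly disjoint. The conditioning approach (your route 1, the paper's route) avoids this issue because conditioning $\cD^n$ on the clean event yields exactly $(\cD')^n$ with the reweighted marginal, so the probability factorization is an identity, not an inequality needing a correlation assumption. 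So the ``reweighting'' you were trying to avoid is in fact what makes the argument sound.
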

\begin{proof}
Let $\bm{\cA}$ be a globally stable learner for $\cH$ in the realizable case as guaranteed by \Cref{thm:global_stable}, and let $n \coloneqq n_{\ref{thm:global_stable}}(\epsilon/2)$ where $n_{\ref{thm:global_stable}}(\cdot)$ is the sample complexity in \Cref{thm:global_stable}. Note that $\bm{\cA}$ is intended for realizable distributions, but our samples come from a non-realizable distribution $\cD$.

Fix a hypothesis $h^*\in \cH$ with $\gamma \coloneqq \cL_{\cD}(h^*) \leq \cL_{\cD}(\cH)+\frac{\eps}{2} \le \frac{3}{4}$.  Note 
\[\Pr_{(\bm{x},\bm{y}) \sim \cD}[\bm{y}=h^*(\bm{x})]=1-\gamma,\] 
and let $\cD'$ be the distribution obtained by conditioning $\cD$ on the event that the example is consistent with $h^*$.  We have $\cL_{\cD'}(h^*)=0$ and thus $\cD'$ is realizable by $\mathcal{H}$. By our choice of $n$ and \Cref{thm:global_stable}, there is a hypothesis $h\in \cH$ with   
    \[
    \cL_{\cD'}(h)\leq \frac{\eps}{2} \ \ \text{ and } \ \   \Pr_{\bm{S} \sim (\cD')^n}[\bm{\cA}(\bm{S}) = h] \geq \frac{1}{(d+1)2^{2^d + 1}}.
    \]
Since $\gamma \le \frac{3}{4}$, we have 
\[
        \Pr_{\bm{S} \sim \cD^n}[\bm{\cA}(\bm{S}) = h] \ge (1-\gamma)^n \Pr_{\bm{S} \sim (\cD')^n}[\bm{\cA}(\bm{S}) = h]\geq \frac{(1-\gamma)^n}{(d+1)2^{2^d + 1}}\geq \frac{1}{(d+1)2^{2^d+1}4^{n}}.
\]
Moreover, 
\begin{align*}
\cL_{\cD}(h)&=\Pr_{(\bm{x},\bm{y})\sim \cD}[h(\bm{x})\neq \bm{y}] 
\\ &\leq 
\Pr_{(\bm{x},\bm{y})\sim \cD}[h(\bm{x})\neq \bm{y}\ | \ h^*(\bm{x})=\bm{y}]+\Pr_{(\bm{x},\bm{y})\sim \cD}[h^*(\bm{x})\neq \bm{y}] 
\\ &=
\cL_{\cD'}(h)+ \cL_{\cD}(h^*) \leq 
\frac{\eps}{2}+ \cL_{\cD}(\cH) + \frac{\eps}{2}\le \cL_\cD(\cH)+\epsilon .\qedhere
\end{align*}  
\end{proof}

\begin{proof}[Proof of \Cref{thm:main}~(ii)]
Consider the following learning rule: with equal probability $\frac{1}{3}$,  output one of the following hypotheses.
\begin{itemize}
\item The hypothesis that always predicts label $1$ (the all-$1$ hypothesis);
\item The hypothesis that always predicts label $0$ (the all-$0$ hypothesis);
\item The output of the learning rule $\bm{\cA}$ from \Cref{lem:For_ii}.
\end{itemize}
If $\cL_\cD(\cH) \ge \frac{2}{3}$, then we have global stability, since at least one of the all-$1$ or the all-$0$ hypothesis has population loss at most $\frac{1}{2}$ which is smaller than $\cL_\cD(\cH)$. Otherwise, the guarantee of \Cref{lem:For_ii} ensures stability.  
\end{proof}

\bibliographystyle{amsalpha}  
\bibliography{ref} 

\end{document}